\documentclass{article}
\usepackage[utf8]{inputenc}
\usepackage[T1]{fontenc}
\usepackage{times}
\usepackage{helvet}
\usepackage{courier}

\usepackage{amsmath}
\usepackage{amssymb}
\usepackage{amsfonts}

\usepackage{booktabs}
\usepackage{multirow}
\usepackage{makecell}
\usepackage{array}
\usepackage{tabularx}

\usepackage{graphicx}
\usepackage{xcolor}
\usepackage{caption}

\usepackage{algorithm}
\usepackage{algorithmic}

\usepackage{enumitem}
\usepackage{tcolorbox}
\usepackage{float}
\usepackage{nicefrac}
\usepackage{microtype}

\usepackage{url}
\usepackage[colorlinks=true, linkcolor=blue, citecolor=blue, urlcolor=blue]{hyperref}

\usepackage{tikz}

\usepackage[numbers,square]{natbib}

\usepackage{arxiv}

\usepackage{amsthm}

\newtheorem{theorem}{Theorem}[section]
\newtheorem{lemma}[theorem]{Lemma}
\newtheorem{proposition}[theorem]{Proposition}

\title{NeuroSymActive: Differentiable Neural-Symbolic Reasoning with Active Exploration for Knowledge Graph Question Answering}

\author{
    Rong Fu \\
    Independent Researcher \\
    Corresponding author \and
    Yang Li \\
    Independent Researcher \and
    Zeyu Zhang \\
    Independent Researcher \and
    Jiekai Wu \\
    Independent Researcher \and
    Yaohua Liu \\
    Independent Researcher \and
    Shuaishuai Cao \\
    Independent Researcher \and
    Yangchen Zeng \\
    Independent Researcher \and
    Yuhang Zhang \\
    Independent Researcher \and
    Xiaojing Du \\
    Independent Researcher \and
    Simon Fong \\
    Independent Researcher
}

\renewcommand{\headeright}{}
\renewcommand{\undertitle}{}
\renewcommand{\shorttitle}{NeuroSymActive}

\hypersetup{
    pdftitle={NeuroSymActive: Differentiable Neural-Symbolic Reasoning with Active Exploration for Knowledge Graph Question Answering},
    pdfsubject={Machine Learning, Artificial Intelligence},
    pdfauthor={Rong Fu, Yang Li, Zeyu Zhang, Jiekai Wu, Yaohua Liu, Shuaishuai Cao, Yangchen Zeng, Yuhang Zhang, Xiaojing Du, Simon Fong},
    pdfkeywords={Knowledge Graph Question Answering, neural-symbolic reasoning, differentiable logic, active exploration, Monte Carlo search},
    pdfstartview={FitH},
    colorlinks=true,
    linkcolor=red,
    citecolor=green,
    filecolor=magenta,
    urlcolor=cyan
}

\begin{document}
\maketitle

\begin{abstract}
Large pretrained language models and neural reasoning systems have advanced many natural language tasks, yet they remain challenged by knowledge-intensive queries that require precise, structured multi-hop inference. Knowledge graphs provide a compact symbolic substrate for factual grounding, but integrating graph structure with neural models is nontrivial: naively embedding graph facts into prompts leads to inefficiency and fragility, while purely symbolic or search-heavy approaches can be costly in retrievals and lack gradient-based refinement. We introduce \textbf{NeuroSymActive}, a modular framework that combines a differentiable neural-symbolic reasoning layer with an active, value-guided exploration controller for Knowledge Graph Question Answering. The method couples soft-unification style symbolic modules with a neural path evaluator and a Monte-Carlo style exploration policy that prioritizes high-value path expansions. Empirical results on standard KGQA benchmarks show that NeuroSymActive attains strong answer accuracy while reducing the number of expensive graph lookups and model calls compared to common retrieval-augmented baselines.
\end{abstract}

\keywords{Knowledge Graph Question Answering; neural-symbolic reasoning; differentiable logic; active exploration; Monte Carlo search}

\section{Introduction}
Large language models have produced striking improvements in many language tasks, but they do not always provide robust solutions for questions that require structured, multi-hop reasoning over world knowledge. Knowledge graphs are a natural vehicle for encoding factual relations and multi-step semantic chains, yet incorporating graphs into neural pipelines raises several practical and methodological issues. One common strategy retrieves relevant subgraphs and injects them into neural models or prompts; this approach grounds model predictions but can be inefficient, since iterative retrieval and repeated model calls are often necessary to assemble multi-hop evidence \cite{jiang2022retrieval,tang2024self,guo2024lightrag}. Another line of work embeds graph structure into differentiable modules or applies soft logical operators to enable gradient-based refinement; these methods improve interpretability and allow end-to-end training, but they can struggle to discover long-range compositional paths without targeted search guidance \cite{gao2022learning,maene2023soft,bai2023complex}. Symbolic search and beam-style expansion offer exhaustive path coverage but at the cost of many KG lookups and brittle heuristics \cite{atif2023beamqa,li2022smartquery}. Recent attempts to combine learning and search point to the benefit of guiding exploration with learned value estimates, but existing instantiations have not fully reconciled differentiable symbolic constraints with principled active exploration \cite{chen2023neuroescape,zhang2023critical}.

Motivated by these gaps, we present NeuroSymActive, a framework that tightly integrates three components. The first is a differentiable neural-symbolic reasoning layer that supports soft pattern matching and differentiable rule scoring, which allows logical preferences to be learned from data while retaining interpretability. The second is a neural path evaluator that assigns value estimates to partial reasoning trajectories. The third is an active exploration controller that uses the evaluator to prioritize expansion and rollout, thereby focusing retrieval effort on promising branches and reducing unnecessary KG queries. This design intentionally separates offline learning of symbolic affinities from online search control, enabling efficient and adaptive multi-hop reasoning under constrained retrieval budgets.

Our contributions are as follows. Our first contribution is a compact differentiable neural-symbolic module for KG reasoning that supports soft-unification, differentiable scoring of rule-like patterns, and seamless interfacing with neural path evaluators. Our second contribution is an active exploration mechanism that integrates learned value signals into a Monte Carlo style search procedure; this mechanism concentrates retrieval and expansion on high-payoff paths and reduces the number of costly lookups. Our third contribution is an empirical study demonstrating that the combined system improves the accuracy-efficiency trade-off on standard KGQA benchmarks compared to retrieval-heavy baselines and to purely symbolic pipelines. Together, these elements make multi-hop KGQA more tractable for deployments that must minimize query cost while preserving interpretable reasoning traces.

\section{Related Work}
\subsection{Neural--Symbolic Integration}
Neural and symbolic methods have been combined to yield systems that leverage the perceptual strength of deep networks alongside the compositionality and interpretability of symbolic rules. Early work on joint neural perception and logic reasoning established the value of end-to-end coupling between perception modules and symbolic inference engines \cite{duan2022deeplogic,nayyeri2021logicenn,pryor2022neupsl}. Differentiable relaxations of logic programs enable gradient-based learning of rule weights and structure, which improves robustness on noisy or structured examples \cite{shindo2021differentiable,shindo2023alpha,gao2022learning,yue2024learning}. More recent studies scale differentiable neuro-symbolic reasoning to large knowledge graphs and propose practical approximations that trade off efficiency and fidelity to logical constraints \cite{shengyuan2023differentiable,zhang2024mote}. Surveys and reviews summarize advances and identify central challenges such as scalability, multimodal integration, and preserving interpretability \cite{bhuyan2024neuro,nawaz2025review}. These works provide the foundational techniques that our system adapts to the multi-hop KGQA setting.

\subsection{Differentiable Rule Learning and Logic Networks}
Learning first-order rules through differentiable semantics has seen rapid progress. Approaches construct continuous surrogates for logical operators, enabling rules to be discovered and refined with gradient signals \cite{gao2022learning,shindo2021differentiable,yue2024learning}. Architectures that produce interpretable logic networks by softening discrete components have demonstrated competitive performance while improving model transparency \cite{yue2024learning,shindo2023alpha}. Complementary lines of work study probabilistic soft logic and energy-based neuro-symbolic models that meld weighted rules with embedding-based scores \cite{pryor2022neupsl,shengyuan2023differentiable}. These methods inform our differentiable inductive logic layer and the techniques used to update rule confidences from supervised and human-provided signals.

\subsection{Human-in-the-Loop and Active Supervision}
Human involvement during model training and inference can dramatically increase sample efficiency and correctness for uncertain decisions. Active learning and related human-in-the-loop paradigms provide mechanisms to query annotators selectively, with proven benefits across domains \cite{mosqueira2023human,huang2024efficient,kim2025active,kim2025freeson}. The literature distinguishes active selection strategies that reduce label cost from interactive protocols that allow richer human guidance \cite{mosqueira2023human,hejna2023few}. Graph-specific active schemes, including hybrid uncertainty reduction and noisy-crowd models, are particularly relevant when queries concern graph structure or relation validity \cite{li2022smartquery,zhang2024nc}. We design our query policy and replay buffer to concentrate human effort where it yields maximal information gain, building on these established techniques.

\subsection{Uncertainty Quantification and Entropy Prediction}
Accurate uncertainty estimates are central to effective query selection and to adaptive exploration strategies. Work on epistemic and heteroscedastic uncertainty for neural networks proposes principled estimators and Bayesian approximations that improve active-selection reliability \cite{hein2022comparison,wang2024epistemic,immer2023effective,seitzer2022pitfalls,rathnakumar2024bayesian}. For graph-structured data and few-shot KG completion, uncertainty-aware GNNs have been proposed to model distributional uncertainty and to guide sampling \cite{li2024uncertainty,ziatdinov2024active,tzes2022graph}. These techniques motivate our neural entropy predictor and the calibration methods we apply to the uncertainty module.

\subsection{Retrieval-Augmented Generation and RAG-DDR}
Integrating retrieval components with generative models has become a practical approach to reduce model hallucinations and recover up-to-date facts \cite{li2024rag,wang2024knowledge,ji2024retrieval,li2024simple}. Recent proposals frame end-to-end optimization of retrieval-augmented systems through differentiable reward signals, aligning retrieval and generation modules for global performance gains \cite{li2024rag,gao2025d,wang2024knowledge}. Our adapter design and the way we condition LLMs on compact, fused path representations are informed by these retrieval and optimization strategies \cite{li2024rag,gao2025d,ji2024retrieval}.

\subsection{Search, Monte Carlo Tree Search and Differentiable Exploration}
Search-based methods that view multi-hop KGQA as sequential decision making have advanced with MCTS- and RL-style explorations. Training-free tree search guided by LLM evaluations yields competitive reasoning traces \cite{song2025rekg,gao2024interpretable,yu2024exact}. Differentiable relaxations for discrete selection, such as Gumbel-Softmax and straight-through estimators, enable backpropagation through exploration and selection decisions \cite{shah2024improving,chaudhary2023gumbel}. Prior work combines structured exploration with neural guidance to improve path discovery over KGs \cite{jiang2023path,zhao2024rpr,dong2023hierarchy}. We adopt progressive widening and a continuously relaxed rollout mechanism so that inner-loop search remains differentiable and compatible with gradient-based updates. Beyond knowledge graph question answering, recent work has explored Monte Carlo Tree Search and evolutionary search for tool planning and control in LLM agents, with ToolTree proposing a dual-feedback MCTS framework with bidirectional pruning to improve planning efficiency in tool-use scenarios~\cite{yang2026tooltree}, and EvoTool investigating self-evolving tool-use policies via blame-aware mutation and diversity-aware selection to enhance agent control~\cite{yang2026evotool}.

\subsection{Knowledge Graph Question Answering and Multi-hop Reasoning}
Classical and modern KGQA approaches range from end-to-end embedding models to hybrid pipelines that retrieve subgraphs and apply symbolic or neural reasoning \cite{zhang2022subgraph,jiang2023path,zheng2023multi,dong2023hierarchy}. The multi-hop KGQA literature emphasizes accurate subgraph retrieval, robust path scoring, and methods for mitigating spurious paths \cite{sun2019pullnet,sun2018open,shi2021transfernet,jiang2023path}. Recent works explore LLMs as reasoning agents over KGs and study decomposition or conversation-based interactions to refine answers \cite{xiong2024interactive,yixing2024chain,omar2023chatgpt}. Our framework addresses the three core KGQA challenges simultaneously: targeted retrieval, symbolic plausibility scoring, and LLM-conditioned answer generation.

\subsection{Prompting, Adapter Strategies and Compact LLMs}
Prompt-based and adapter-based conditioning of frozen LLMs provide parameter-efficient ways to incorporate structured context \cite{bai2023knowprefix,wu2023infoprompt,chen2024lifelong,cheng2023uprise}. Knowledge adapters map fused graph and symbolic signals into compact soft prompts that preserve task-relevant information while controlling token overhead. These design choices are aligned with findings that compact prompt tuning can make LLMs effectively consume external knowledge without full fine-tuning \cite{wu2023infoprompt,bai2023knowprefix,chen2024lifelong}.

\subsection{Complementary Methods: Rule-Regularized Embeddings and Scalability}
A number of works enhance embedding-based reasoning with logical constraints or rule regularizers to improve interpretability and generalization \cite{nayyeri2021logicenn,pryor2022neupsl,zhang2024mote}. Scalability-oriented research studies approximations and filtering mechanisms that reduce the combinatorial cost of rule evaluation on large graphs \cite{shengyuan2023differentiable,zhang2024mote,wan2025automatic}. These contributions guided our choices for selective triple filtering and for efficient batched evaluation during search.

\subsection{Summary of Positioning}
The contributions described in this paper draw upon and synthesize multiple active research threads: differentiable logic and rule learning \cite{shindo2021differentiable,gao2022learning,yue2024learning,shindo2023alpha}, neural-symbolic fusion and soft-rule integration \cite{duan2022deeplogic,pryor2022neupsl,shengyuan2023differentiable}, human-in-the-loop and active supervision strategies \cite{mosqueira2023human,li2022smartquery,huang2024efficient}, uncertainty modeling for selective querying \cite{hein2022comparison,li2024uncertainty,immer2023effective}, and differentiable exploration mechanisms that allow end-to-end updates through search \cite{song2025rekg,shah2024improving,gao2024interpretable}. The following sections explain how these elements are combined and extended in our NeuroSymActive framework to address multi-hop KGQA with constrained annotation budgets.
\begin{figure*}[t]
  \centering
  \includegraphics[width=0.88\textwidth]{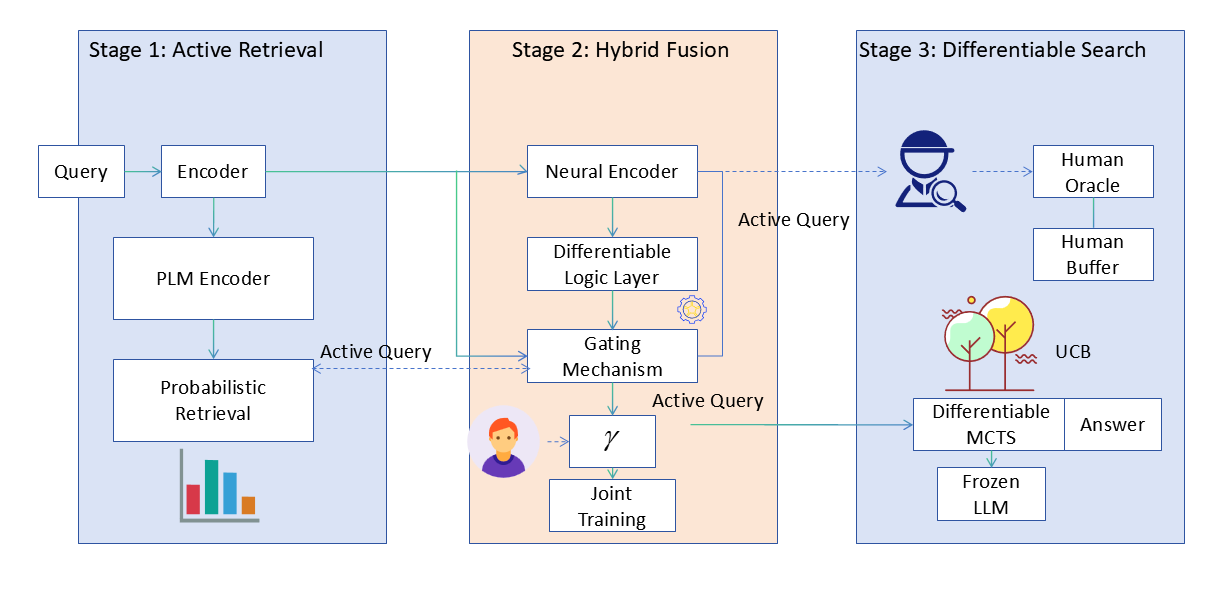} 
  \caption{Architectural overview of the \textbf{NeuroSymActive} framework for knowledge graph question answering. 
  The framework operates via a coupled dual-loop optimization process across three main stages: 
  \textbf{Stage 1: Uncertainty-Aware Active Retrieval}, which utilizes a Bayesian head to model heteroscedastic uncertainty in hop prediction and a neural entropy predictor $\eta_\theta$ to estimate information gain ($IG$). This stage selectively invokes a \textbf{Human Oracle} via the $\mathrm{QUERY\_HUMAN}$ action when uncertainty $\mathcal{U}(\mathbf{s}_t)$ exceeds the threshold $\tau_{\mathrm{hop}}$.
  \textbf{Stage 2: Differentiable Neural-Symbolic Fusion}, where a hybrid Knowledge Adapter merges continuous path embeddings ($\mathbf{z}_f^{\mathrm{neural}}$) with soft symbolic plausibility scores ($\mathbf{z}_f^{\mathrm{sym}}$) derived from a \textbf{Differentiable Inductive Logic Layer (DILL)}. Rule confidences $w_\rho$ are updated via gradient signals from active supervision stored in the human replay buffer $\mathcal{D}_{\mathrm{human}}$.
  \textbf{Stage 3: Differentiable MCTS with Active Exploration}, which implements a relaxed Monte Carlo Tree Search. It employs \textbf{Progressive Widening} governed by predictive uncertainty and treats human intervention as explicit nodes with value $V_{\mathrm{human}}(s)$. The search is fully differentiable via Gumbel-Softmax relaxations, allowing end-to-end joint training of the policy $\pi_\phi$, value $v_\psi$, and symbolic weights through a composite multi-objective loss $\mathcal{L}_{\mathrm{total}}$.} 
  \label{fig:neurosymactive_architecture}
\end{figure*}
\section{Methodology}

This section describes the NeuroSymActive methodology for multi-hop question answering over knowledge graphs. The design preserves a retrieve--embed--reason pipeline while introducing active, differentiable retrieval, a hybrid neural-symbolic knowledge adapter, and an iterative neural-guided graph exploration mechanism that interleaves retrieval and reasoning.

\subsection{Problem formalization}
We cast knowledge-graph question answering as a partially observable sequential decision process with an explicit neural-symbolic duality. Let \(\mathcal{G}=\{(h,r,t)\}\) denote a knowledge graph and let \(q\) be a natural-language query. The agent maintains an internal cognitive state \(\mathbf{s}_t\) that decomposes into a continuous neural component \(\mathbf{s}_t^{\mathrm{neural}}\) and a discrete symbolic component \(\mathbf{s}_t^{\mathrm{sym}}\). The neural component contains differentiable embeddings of currently retrieved subgraphs and related latent variables, whereas the symbolic component encodes discrete logical constraints, type relations and other symbolic checks used for pruning and consistency verification.

At each decision step \(t\) the agent selects an action \(a_t\) from a set of candidate retrieval/expansion operations \(\mathcal{A}_t\) augmented with an explicit human-query action \(\mathrm{QUERY\_HUMAN}\), that is \(a_t\in\mathcal{A}_t\cup\{\mathrm{QUERY\_HUMAN}\}\). Selecting \(\mathrm{QUERY\_HUMAN}\) triggers a supervised annotation with incurred cost \(c_{\mathrm{human}}>0\). An episode terminates when the agent emits a final answer \(a\) or exhausts its computational or annotation budget.

The agent's objective trades off answer correctness and human annotation cost: it seeks to maximize expected accuracy while minimizing cumulative annotation expenditure. Formally we optimize a bi-objective that can be collapsed into a single scalar objective by penalizing human queries:
\begin{equation}\label{eq:obj}
\begin{aligned}
\mathcal{J}&=\mathbb{E}\bigl[\mathbb{I}[\hat{a}=a^\star]\bigr]\\
&\quad -\beta\sum_{t}\mathbb{I}\bigl[a_t=\mathrm{QUERY\_HUMAN}\bigr]\,c_{\mathrm{human}},
\end{aligned}
\end{equation}
where \(\hat{a}\) denotes the emitted answer, \(a^\star\) is the ground-truth answer, \(\mathbb{I}[\cdot]\) is the indicator function, and \(\beta>0\) balances accuracy and annotation cost. In (\ref{eq:obj}), the expectation is taken over the agent's stochastic policy and any environment randomness.

A reasoning path \(p\) is represented by an ordered sequence of triples with both neural embeddings and symbolic identifiers:
\begin{equation}\label{eq:path_def}
p=\bigl((e_0,r_1,e_1),\dots,(e_{L-1},r_L,e_L)\bigr),
\end{equation}
where each entity \(e_i\) and relation \(r_j\) is associated with a continuous embedding \(\mathbf{e}_i,\mathbf{r}_j\in\mathbb{R}^d\) used by the differentiable modules and with a discrete symbolic identifier used by the symbolic checks, and \(L\) denotes the dynamic path length determined by the agent's value function or budget termination. In (\ref{eq:path_def}), the neural embedding vectors live in \(\mathbb{R}^d\) and the symbolic identifiers index logical constraints and types.

\subsection{Algorithm framework}
The following algorithm summarizes the coupled inner/outer loop of NeuroSymActive in a concise form. Key estimators and relaxations are referenced where used. The overall procedure is summarized in Algorithm~\ref{alg:neurosymactive_compact}.

\begin{algorithm}[t]
\caption{NeuroSymActive}
\label{alg:neurosymactive_compact}
\begin{algorithmic}[1]
\REQUIRE knowledge graph \(G\), query \(q\), budgets, human buffer \(\mathcal{D}_{\mathrm{human}}\)
\STATE extract anchors \(B\) and compute \(V_q=\mathrm{PLM}(q)\)
\STATE obtain hop distribution and uncertainty via (\ref{eq:bayes_hq})
\STATE initialize \(G_{\mathrm{cur}}\leftarrow\varnothing\) and search root
\WHILE{not terminated}
  \STATE if uncertainty \(\mathcal{U}(\mathbf{s}_t)>\tau_{\mathrm{human}}\) then invoke \(\mathrm{QueryHuman}\) and update rules by (\ref{eq:rule_update})
  \FOR{iteration = 1 \TO \(N_{\mathrm{inner}}\)}
    \STATE enumerate candidate relations \(\{r\}\) and predict \(\widehat{\mathrm{IG}}(r)\) via (\ref{eq:neural_ig})
    \STATE compute utility \(u(r)\) using (\ref{eq:active_util}) and obtain soft weights \(\{s_r\}\) via (\ref{eq:gumbel_softmax})
    \STATE expand \(G_{\mathrm{cur}}\) with top/sampled \(s_r\), compute \(\mathbf{z}_f^{\mathrm{neural}}\) via (\ref{eq:zf_neural})
    \STATE compute symbolic activations and score via (\ref{eq:soft_rule})--(\ref{eq:sym_score}) and fuse using (\ref{eq:fusion_stage2})
    \STATE run differentiable rollouts under tree policy \(\pi_{\mathrm{tree}}\) in (\ref{eq:soft_tree}), guided by \(\mathrm{UCB}_{\mathrm{PW}}\) in (\ref{eq:ucb_pw}); treat human choices with value in (\ref{eq:human_node})
  \ENDFOR
  \STATE form \(\mathcal{L}_{\mathrm{total}}\) per (\ref{eq:total_loss}) and update \(\pi_\phi, v_\psi,\) adapter and DILL weights
\ENDWHILE
\RETURN highest-scoring answer from the LLM conditioned on the adapter prompt
\end{algorithmic}
\end{algorithm}

\paragraph{Implementation guidance.}  
Use minibatched inner-loop rollouts with shared batched GNNs for \(\eta_\theta\) to amortize entropy predictions. Maintain separate optimizers for DILL parameters \(\{w_\rho\}\) and neural encoder weights to allow differential learning rates. For stability, anneal the Gumbel temperature \(\tau\) in (\ref{eq:gumbel_softmax}) and the search temperature \(\tau_{\mathrm{search}}\) in (\ref{eq:soft_tree}), and apply gradient normalization when minimizing (\ref{eq:total_loss}) so that no single objective dominates training.

\subsection{High-level architecture: coupled dual-loop optimization}
NeuroSymActive operates through two tightly coupled loops that run on distinct time scales and exchange information continuously. The inner loop is a fast, differentiable neural-symbolic exploration driven by a neural policy \(\pi_\phi\) and a value estimator \(v_\psi\). At this time scale the system performs many lightweight expansions: the policy proposes candidate relation expansions, symbolic constraints encoded as a differentiable logical layer prune infeasible expansions, and discrete choices are represented with continuous relaxations (for example Gumbel-Softmax) so that gradients can flow through the exploration decisions. The inner loop thus produces a relaxed reasoning graph whose edge-selection variables are stochastic, differentiable weights.

The outer loop runs at a slower time scale and implements active human-in-the-loop supervision. An uncertainty quantifier \(\mathcal{U}(\mathbf{s}_t)\), which monitors entropy and saturation of information gain, decides whether to invoke \(\mathrm{QUERY\_HUMAN}\). When the uncertainty metric exceeds a pre-specified threshold or when information gain plateaus, the system requests expert feedback for critical decisions such as hop depth, relation relevance, or path validity. Human annotations are stored in a replay buffer \(\mathcal{D}_{\mathrm{human}}\) and used for off-policy updates of \(\pi_\phi\), \(v_\psi\), and the neural-symbolic adapter.

The knowledge adapter forms the bridge between modalities by fusing three signals: continuous path embeddings produced by the neural encoders, soft plausibility scores returned by the differentiable symbolic module, and uncertainty signals used to trigger outer-loop queries. The adapter projects fused representations into the frozen LLM's token embedding space as compact soft prompts. Gradients from the LLM's downstream loss are routed back to the differentiable components through the adapter's continuous interfaces while symbolic constraints remain enforced via soft scores.

\subsection{Stage 1: Uncertainty-aware active retrieval}
We first refine hop prediction using a probabilistic neural head that models heteroscedastic uncertainty. The semantic encoder computes
\begin{equation}\label{eq:Vq_def}
V_q=\mathrm{PLM}(q),
\end{equation}
where \(\mathrm{PLM}(\cdot)\) is a pretrained language encoder and \(V_q\in\mathbb{R}^d\) is the semantic vector for query \(q\). A Bayesian head yields a distribution over hop budgets:
\begin{equation}\label{eq:bayes_hq}
\begin{aligned}
P(h_q\mid V_q)&=\mathrm{Softmax}\bigl(\mathrm{MLP}(V_q)+\boldsymbol{\epsilon}\bigr),\\
&\quad\text{where }\boldsymbol{\epsilon}\sim\mathcal{N}\bigl(\mathbf{0},\sigma^2(V_q)\mathbf{I}\bigr).
\end{aligned}
\end{equation}
where \(\mathrm{MLP}(\cdot)\) is a small feedforward network and \(\sigma^2(V_q)\) is a learned heteroscedastic variance. In (\ref{eq:bayes_hq}), the additive Gaussian noise models epistemic/aleatoric uncertainty and the softmax converts logits to a probability distribution over discrete hop counts. If \(\max_h P(h_q=h\mid V_q)\) falls below a threshold \(\tau_{\mathrm{hop}}\), the system issues \(\mathrm{QueryHuman}(\textit{hop-depth},q)\) to obtain an annotated hop count \(h_q^\star\).

Estimating the information gain for a candidate relation \(r\) is computationally expensive if performed by Monte-Carlo simulation. We therefore learn a neural entropy predictor \(\eta_\theta\) that maps subgraph features to predicted answer entropy and use it to approximate information gain:
\begin{equation}\label{eq:neural_ig}
\widehat{\mathrm{IG}}(r)=\eta_\theta\bigl(G_{\mathrm{cur}},r,V_q\bigr)-\eta_\theta\bigl(G_{\mathrm{cur}},\varnothing,V_q\bigr),
\end{equation}
where \(\eta_\theta(\cdot)\) is implemented as a graph neural network that ingests the current subgraph \(G_{\mathrm{cur}}\), the candidate relation \(r\), and the query embedding \(V_q\), and returns a scalar predictive entropy. In (\ref{eq:neural_ig}), the second term denotes predicted entropy without adding relation \(r\).

We generalize the acquisition utility to support both autonomous and human-augmented selection. The two-mode utility is
\begin{equation}\label{eq:active_util}
u(r)=
\left\{
\begin{array}{ll}
\widehat{\mathrm{IG}}(r)-\lambda\,C(r), & \text{(auto mode)}\\[6pt]
\widehat{\mathrm{IG}}_{\mathrm{human}}(r)-\lambda\,C(r)-c_{\mathrm{human}}, & \text{(human)}
\end{array}
\right.
\end{equation}
where \(C(r)\) is an estimated retrieval cost, \(\lambda>0\) balances information gain and cost, \(c_{\mathrm{human}}\) is the explicit annotation cost for invoking human feedback, and \(\widehat{\mathrm{IG}}_{\mathrm{human}}(\cdot)\) leverages relevance labels from the human replay buffer \(\mathcal{D}_{\mathrm{human}}\). In (\ref{eq:active_util}), the human-augmented mode is considered when the uncertainty quantifier or policy indicates high ambiguity at a decision point.

To allow end-to-end optimization, discrete relation selection is relaxed via a Gumbel-Softmax reparameterization. For a candidate set \(\{r\}\) with base logits \(\log\pi(r)\) we compute soft selection weights
\begin{equation}\label{eq:gumbel_softmax}
\begin{aligned}
s_r&=\frac{\exp\bigl((\log\pi(r)+g_r)/\tau\bigr)}{\sum_{r'}\exp\bigl((\log\pi(r')+g_{r'})/\tau\bigr)},\\
&\qquad g_r\sim\mathrm{Gumbel}(0,1),
\end{aligned}
\end{equation}
where \(\tau>0\) is a temperature parameter and \(s_r\in[0,1]\) approximates a one-hot selection in the low-temperature limit. In (\ref{eq:gumbel_softmax}), sampling the Gumbel noise \(g_r\) and dividing by \(\tau\) yields a differentiable approximation to categorical sampling used during inner-loop exploration and training.

\subsection{Stage 2: Differentiable neural-symbolic fusion}

The knowledge adapter places a differentiable inductive logic layer at the core of neural-symbolic integration so that symbolic inference participates in end-to-end gradient-based learning. The adapter receives path-level neural encodings and yields fused representations that carry both distributional semantics and logical validity.

For a sampled reasoning path \(p\) we first obtain element-wise embeddings from textual and structural encoders. For the \(i\)-th triple in \(p\) denote the relation, head entity and tail entity embeddings by
\begin{equation}\label{eq:embeds_stage2}
\begin{aligned}
\mathbf{e}^r_i&=\mathrm{Embed}(r_i),\quad
\mathbf{e}^h_i=\mathrm{Embed}(e_{i-1}),\\
&\mathbf{e}^t_i=\mathrm{Embed}(e_i),
\end{aligned}
\end{equation}
where \(\mathrm{Embed}(\cdot)\) denotes a shared textual encoder and each vector lies in \(\mathbb{R}^d\). In (\ref{eq:embeds_stage2}), the symbols \(\mathbf{e}^r_i\), \(\mathbf{e}^h_i\) and \(\mathbf{e}^t_i\) denote relation, head and tail continuous embeddings respectively.

Local triple structure is summarized by a structural encoder and aggregated across the path to obtain a neural structural summary:
\begin{equation}\label{eq:struct_stage2}
\mathbf{s}_i=\mathrm{StructEmb}\bigl(\mathbf{e}^h_i,\mathbf{e}^r_i,\mathbf{e}^t_i\bigr),\qquad
\mathbf{z}_s=\mathrm{Agg}\bigl(\mathbf{s}_1,\dots,\mathbf{s}_L\bigr),
\end{equation}
where \(\mathrm{StructEmb}(\cdot)\) is a small permutation sensitive network that captures local ordering and \(\mathrm{Agg}(\cdot)\) denotes a learnable pooling operator producing \(\mathbf{z}_s\in\mathbb{R}^d\). In (\ref{eq:struct_stage2}), \(\mathbf{s}_i\) and \(\mathbf{z}_s\) are the local and global structural summaries respectively.

A text fusion operator produces a path-level textual vector:
\begin{equation}\label{eq:zt_stage2}
\mathbf{z}_t=\mathrm{Fuse}\bigl(\mathbf{e}^h_1,\dots,\mathbf{e}^h_L,\mathbf{e}^r_1,\dots,\mathbf{e}^t_L\bigr),
\end{equation}
where \(\mathrm{Fuse}(\cdot)\) preserves hop order and returns \(\mathbf{z}_t\in\mathbb{R}^d\). In (\ref{eq:zt_stage2}), \(\mathbf{z}_t\) aggregates textual signals across the path.

The neural pathway produces a compact neural path vector:
\begin{equation}\label{eq:zf_neural}
\mathbf{z}_f^{\mathrm{neural}}=\mathrm{KnowledgeEncoder}\bigl([\mathbf{z}_t,\mathbf{z}_s]\bigr),
\end{equation}
where \(\mathrm{KnowledgeEncoder}(\cdot)\) denotes a parameterized neural network and \([\cdot,\cdot]\) denotes concatenation. In (\ref{eq:zf_neural}), \(\mathbf{z}_f^{\mathrm{neural}}\in\mathbb{R}^d\) is the neural encoding of path \(p\).

Symbolic inference is implemented as a differentiable inductive logic layer (DILL) that grounds learned rules into continuous predicate evaluations. Each rule \(\rho\) has a learned confidence weight \(w_\rho\in[0,1]\) and the rule body is evaluated by a neural predicate grounding \(\pi_\rho(p)\) that maps path embeddings to truth degrees. The soft activation of rule \(\rho\) on path \(p\) is
\begin{equation}\label{eq:soft_rule}
\phi_\rho(p)=T\bigl(w_\rho,\;\pi_\rho(p)\bigr),
\end{equation}
where \(T\) is a differentiable t-norm such as the product operator and \(\pi_\rho(p)\in[0,1]\) is the neural predicate grounding for \(\rho\) on path \(p\). In (\ref{eq:soft_rule}), \(\phi_\rho(p)\) denotes the soft truth value of rule \(\rho\) on \(p\).

Symbolic plausibility aggregates contributions from applicable rules to produce a path-level symbolic score:
\begin{equation}\label{eq:sym_score}
\mathbf{z}_f^{\mathrm{sym}}=\mathrm{Agg}_{\rho\in\mathcal{R}_p}\,\phi_\rho(p),
\end{equation}
where \(\mathcal{R}_p\) is the set of rules relevant to path \(p\) and the aggregation operator produces \(\mathbf{z}_f^{\mathrm{sym}}\in\mathbb{R}^d\). In (\ref{eq:sym_score}), \(\mathbf{z}_f^{\mathrm{sym}}\) encodes symbolic plausibility derived from the rule base.

Neural and symbolic vectors are fused with a learned gating mechanism that admits gradient flow into both pathways:
\begin{equation}\label{eq:fusion_stage2}
\begin{aligned}
\mathbf{z}_f={}&\gamma\bigl([\mathbf{z}_f^{\mathrm{neural}},\mathbf{z}_f^{\mathrm{sym}}]\bigr)\odot\mathbf{z}_f^{\mathrm{neural}}\\
&+\bigl(1-\gamma\bigl([\mathbf{z}_f^{\mathrm{neural}},\mathbf{z}_f^{\mathrm{sym}}]\bigr)\bigr)\odot\mathbf{z}_f^{\mathrm{sym}},
\end{aligned}
\end{equation}
where \(\gamma(\cdot)\) is a sigmoid gating network that outputs values in \([0,1]\), \(\odot\) denotes elementwise multiplication and \(\mathbf{z}_f\in\mathbb{R}^d\) is the fused representation used downstream. In (\ref{eq:fusion_stage2}), the gating function \(\gamma\) mediates gradient distribution between the neural and symbolic components.

Rule parameters are updated from active supervision via gradient signals derived from human labels. Given a human-provided binary label \(y_{\mathrm{human}}\in\{0,1\}\) for path validity, the differentiable logic loss is the binary cross entropy between the aggregated symbolic activation and the label:
\begin{equation}\label{eq:logic_loss}
\mathcal{L}_{\mathrm{logic}}=\mathrm{BCE}\bigl(\mathrm{sigmoid}\bigl(\mathbf{z}_f^{\mathrm{sym}}\bigr),\,y_{\mathrm{human}}\bigr),
\end{equation}
where \(\mathrm{BCE}(\cdot,\cdot)\) denotes binary cross entropy. In (\ref{eq:logic_loss}), gradients with respect to the rule confidences \(\{w_\rho\}\) are obtained by backpropagating through \(\phi_\rho\) and \(\pi_\rho\).

A local rule confidence update uses the gradient of the logic loss:
\begin{equation}\label{eq:rule_update}
\Delta w_\rho \propto -\eta\,\frac{\partial \mathcal{L}_{\mathrm{logic}}}{\partial w_\rho},
\end{equation}
where \(\eta>0\) is a learning rate. In (\ref{eq:rule_update}), \(\Delta w_\rho\) denotes the change applied to the rule confidence \(w_\rho\) during a supervised update step.

\subsection{Stage 3: Differentiable Monte Carlo Tree Search with active node expansion}

The exploration module implements a differentiable MCTS variant that adapts branching with predictive uncertainty and treats human queries as explicit special nodes. The search balances exploitation and exploration while permitting gradients to propagate from final losses through soft decisions made in the tree.

Progressive widening adapts the effective branching factor to node uncertainty. A node selection score extends the classical upper confidence bound by a multiplicative uncertainty factor:
\begin{equation}\label{eq:ucb_pw}
\begin{aligned}
\mathrm{UCB}_{\mathrm{PW}}(C)={}&\frac{V_C}{n_C}+c\sqrt{\frac{\ln n_{\mathrm{parent}}}{n_C}}\\
&\cdot\mathbb{I}\bigl[n_C<k\,n_{\mathrm{parent}}^{\alpha}\bigl(1+\mathcal{U}(C)\bigr)\bigr],
\end{aligned}
\end{equation}
where \(V_C\) and \(n_C\) are the cumulative value and visit count for node \(C\), \(n_{\mathrm{parent}}\) is the parent visit count, \(c>0\) and \(k>0\) are constants, \(\alpha\in(0,1)\) controls widening, and \(\mathcal{U}(C)\ge0\) is predictive uncertainty at \(C\). In (\ref{eq:ucb_pw}), the indicator function determines whether progressive widening permits adding new children.

Human oracle nodes are embedded in the search tree as actions that, if selected, return annotations at cost. The expected value of invoking a human at state \(s\) is
\begin{equation}\label{eq:human_node}
V_{\mathrm{human}}(s)=\mathbb{E}_{y\sim\mathcal{H}(s)}\bigl[V(s\mid y)\bigr]-c_{\mathrm{human}},
\end{equation}
where \(\mathcal{H}(s)\) is a model of human responses learned from the replay buffer and \(V(s\mid y)\) is the downstream value conditioned on human label \(y\). In (\ref{eq:human_node}), the expectation approximates the benefit of human intervention net of annotation cost.

To permit end-to-end gradient propagation the discrete visitation counts and argmax selections are relaxed into soft attention weights. The tree policy under the soft relaxation is
\begin{equation}\label{eq:soft_tree}
\begin{aligned}
&\pi_{\mathrm{tree}}(a\mid s)\\
&=\frac{\exp\bigl((Q(s,a)+c\cdot\mathrm{PUCT}(s,a))/\tau_{\mathrm{search}}\bigr)}{\sum_{a'}\exp\bigl((Q(s,a')+c\cdot\mathrm{PUCT}(s,a'))/\tau_{\mathrm{search}}\bigr)},
\end{aligned}
\end{equation}
where \(Q(s,a)\) is the action value estimator, \(\mathrm{PUCT}(s,a)\) injects prior policy and soft visit information, \(c>0\) is an exploration constant and \(\tau_{\mathrm{search}}>0\) is a temperature parameter controlling softness. In (\ref{eq:soft_tree}), \(\pi_{\mathrm{tree}}(a\mid s)\) is a differentiable distribution over actions at state \(s\), enabling gradients from downstream losses to update policy and value networks.

\subsection{Joint training: multi-objective neuro-symbolic active learning}

We train the full system by optimizing a composite objective that balances answer quality, efficient exploration, symbolic consistency, and active learning efficacy. The total loss is
\begin{equation}\label{eq:total_loss}
\mathcal{L}_{\mathrm{total}}=\mathcal{L}_{\mathrm{answer}}+\lambda_1\mathcal{L}_{\mathrm{explore}}+\lambda_2\mathcal{L}_{\mathrm{symbolic}}+\lambda_3\mathcal{L}_{\mathrm{active}},
\end{equation}
where \(\{\lambda_i\}_{i=1}^3\) are nonnegative weights that are adapted during training to maintain a Pareto balance across objectives. In (\ref{eq:total_loss}), \(\mathcal{L}_{\mathrm{total}}\) aggregates the four complementary criteria.

The answer generation term is the expected negative log-likelihood under the soft tree distribution:
\begin{equation}\label{eq:answer_loss}
\mathcal{L}_{\mathrm{answer}}=\mathbb{E}_{\pi_{\mathrm{tree}}}\bigl[\mathcal{L}_{\mathrm{gen}}(s)\bigr],
\end{equation}
where \(\mathcal{L}_{\mathrm{gen}}(s)\) denotes the negative log-probability assigned by the frozen language model to the ground-truth answer when conditioned on the soft prompt generated under selection \(s\). In (\ref{eq:answer_loss}), the expectation is taken under the differentiable search policy \(\pi_{\mathrm{tree}}\).

Exploration regularization encourages breadth and prevents premature collapse:
\begin{equation}\label{eq:explore_loss}
\mathcal{L}_{\mathrm{explore}}=-\mathbb{H}\bigl(\pi_{\mathrm{tree}}\bigr),
\end{equation}
where \(\mathbb{H}(\cdot)\) denotes Shannon entropy. In (\ref{eq:explore_loss}), maximizing entropy fosters diverse exploration.

Symbolic consistency enforces agreement between symbolic plausibility and learned neural judgments:
\begin{equation}\label{eq:symbolic_loss}
\mathcal{L}_{\mathrm{symbolic}}=\mathbb{E}_{p}\bigl[\lVert\mathbf{z}_f^{\mathrm{sym}}-\widehat{y}_{\mathrm{logic}}(p)\rVert_2^2\bigr],
\end{equation}
where \(\widehat{y}_{\mathrm{logic}}(p)\) is a neural estimate of path validity and the expectation runs over sampled paths \(p\). In (\ref{eq:symbolic_loss}), the mean squared error penalizes disagreement between symbolic scores and neural judgments.

Active learning efficacy explicitly trains the query policy to maximize information gain per unit annotation cost:
\begin{equation}\label{eq:active_obj}
\mathcal{L}_{\mathrm{active}}=-\mathbb{E}_{q\sim\mathcal{D}_{\mathrm{uncertain}}}\Biggl[\frac{\mathrm{IG}\bigl(q\mid\mathrm{Human}(q)\bigr)}{c_{\mathrm{human}}(q)}\Biggr],
\end{equation}
where \(\mathcal{D}_{\mathrm{uncertain}}\) is a buffer of high-uncertainty queries, \(\mathrm{IG}(q\mid\mathrm{Human}(q))\) denotes the estimated information gain after receiving human annotation and \(c_{\mathrm{human}}(q)\) is the annotation cost. In (\ref{eq:active_obj}), maximizing the ratio trains the system to ask efficient questions.

The multi-objective weights \(\{\lambda_i\}\) are adjusted using gradient normalization so that none of the objectives dominates training. Gradients flow through the soft tree, the differentiable logic layer and the adapter projection so that rule confidences, neural encoder parameters, and policy/value networks are updated jointly.

\section{Experiments}
This section presents an empirical assessment of NeuroSymActive on established multi-hop knowledge graph question-answering benchmarks. Our investigation is organized along three dimensions: performance gains relative to existing LLM-augmented KGQA systems, generalization across diverse backbone architectures, and operational characteristics regarding computational footprint and response consistency when deploying parameter-efficient language models. Match accuracy, quantified via Hits@1, serves as the principal evaluation criterion in accordance with established conventions in the literature.

\subsection{Datasets}
We use two widely adopted Freebase-based datasets. WebQuestionsSP (WebQSP)\cite{yih2016value} contains 4,737 questions, each providing a topic entity and a SPARQL query; answers generally require up to two hops. ComplexWebQuestions (CWQ)\cite{talmor2018web} contains 34,689 question–answer pairs derived from WebQSP and augmented to demand more compositional multi-hop reasoning, with some queries requiring up to four hops. All experiments use the public splits and standard evaluation scripts for Hits@1.
\begin{table*}[t]
\centering
\caption{Performance comparison between NeuroSymActive, LightPROF and representative baselines on WebQSP and CWQ. Bold indicates the best reported result.}
\label{tab:main}
\resizebox{0.8\textwidth}{!}{
\begin{tabular}{lcc}
\toprule
\textbf{Method} & \textbf{WebQSP (Hits@1)} & \textbf{CWQ (Hits@1)} \\
\midrule
KV-Mem\cite{miller2016key} & 46.7 & 18.4 \\
EmbedKGQA\cite{saxena2020improving} & 66.6 & 45.9 \\
NSM\cite{he2021improving} & 68.7 & 47.6 \\
KGT5\cite{saxena2022sequence} & 56.1 & 36.5 \\
GraftNet\cite{sun2018open} & 66.4 & -- \\
PullNet\cite{sun2019pullnet} & 68.1 & -- \\
TransferNet\cite{shi2021transfernet} & 71.4 & 48.6 \\
UniKGQA\cite{jiang2022unikgqa} & 75.1 & 50.7 \\
LLaMa2-7B-Chat & 61.4 & 31.5 \\
LLaMa2-70B-Chat & 57.4 & 39.1 \\
ToG (LLaMa2-70B)\cite{sun2023think} & 68.9 & 57.6 \\
StructGPT (ChatGPT)\cite{jiang2023structgpt} & 72.6 & 54.3 \\
AgentBench\cite{liu2023agentbench} & 47.8 & 24.8 \\
KnowledgeNavigator (LLaMa2-70B)\cite{guo2024knowledgenavigator} & 71.8 & -- \\
LightPROF (LLaMa2-7B)\cite{ao2025lightprof} & 71.2 & 48.5 \\
LightPROF (LLaMa3-8B)\cite{ao2025lightprof} & 83.8 & 59.3 \\
\midrule
\textbf{NeuroSymActive (LLaMa3-8B)} & \textbf{87.1} & \textbf{62.5} \\
\bottomrule
\end{tabular}
}
\end{table*}

\begin{table}[h]
\centering
\caption{Extended ablation: disentangling architectural choices.}
\label{tab:ablation}
\resizebox{0.66\textwidth}{!}{%
\begin{tabular}{@{}lcc@{}}
\toprule
\textbf{Variant} & \textbf{WebQSP} & \textbf{CWQ} \\
 & \textbf{(Hits@1)} & \textbf{(Hits@1)} \\
\midrule
Full NeuroSymActive & 87.1 & 62.5 \\
\midrule
\multicolumn{3}{@{}l}{\textit{Core mechanisms}} \\
w/o Knowledge Adapter & 79.5 & 54.2 \\
w/o Uncertainty-aware triggering & 83.4 & 59.8 \\
w/o Progressive Widening (fixed $k$=3) & 84.6 & 60.3 \\
\midrule
\multicolumn{3}{@{}l}{\textit{Neural-symbolic integration}} \\
w/o DILL rule learning (fixed rules) & 83.9 & 59.1 \\
w/o Neural predicates (pure symbolic) & 78.2 & 52.4 \\
\midrule
\multicolumn{3}{@{}l}{\textit{Differentiability}} \\
w/ Hard selection (STE) & 85.3 & 61.2 \\
w/ REINFORCE (no relaxation) & 84.8 & 60.5 \\
\bottomrule
\end{tabular}%
}
\end{table}

\begin{table}[h]
\centering
\caption{Effect of structure-encoder design on NeuroSymActive.}
\label{tab:struct_enc}
\resizebox{0.66\textwidth}{!}{
\begin{tabular}{lcc}
\toprule
\textbf{Encoder Type} & \textbf{WebQSP} & \textbf{CWQ} \\
\midrule
Triplet-Only (H, R, T) & 84.3 & 59.0 \\
Context-Aware (order-sensitive) & 87.1 & 62.5 \\
Relation-Path (path-centric) & 84.7 & 60.8 \\
\bottomrule
\end{tabular}
}
\end{table}

\begin{table}[h]
\centering
\caption{Error mode distribution across datasets. Percentages denote proportions of incorrect predictions.}
\label{tab:error_breakdown}
\resizebox{0.66\textwidth}{!}{
\begin{tabular}{@{}lccc@{}}
\toprule
Dataset & Retrieval (\%) & Reasoning (\%) & Generation (\%) \\
\midrule
WebQSP & 31.2 & 42.8 & 26.0 \\
CWQ & 48.5 & 32.4 & 19.1 \\
\bottomrule
\end{tabular}
}
\end{table}


\begin{figure}[h]
\centering
\includegraphics[width=0.88\textwidth]{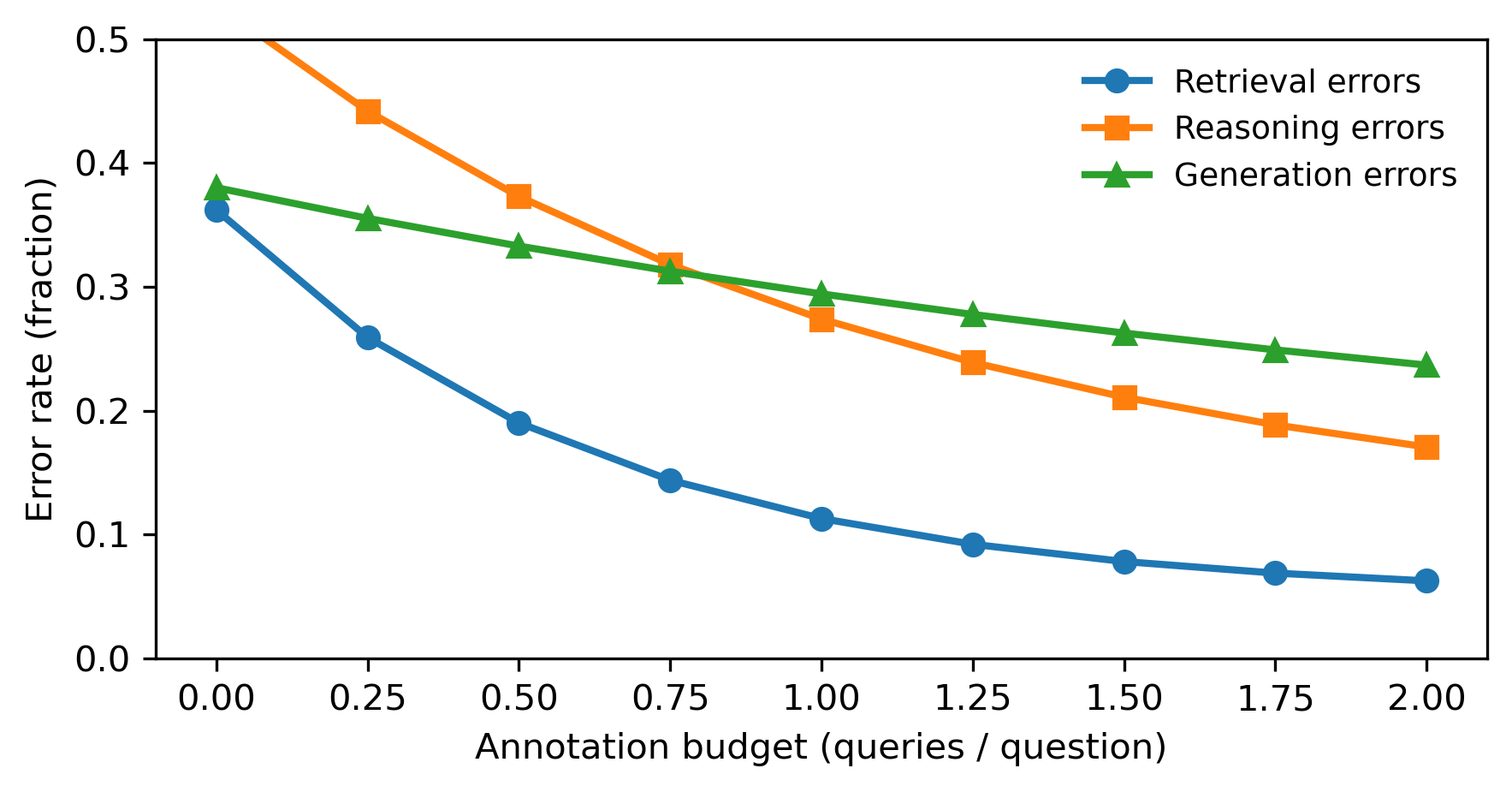}
\caption{Reduction in error rates for each failure mode as annotation budget increases. Confidence intervals obtained by bootstrapping.}
\label{fig:error_reduction_by_mode}
\end{figure}

\begin{figure}[h]
\centering
\includegraphics[width=0.66\textwidth]{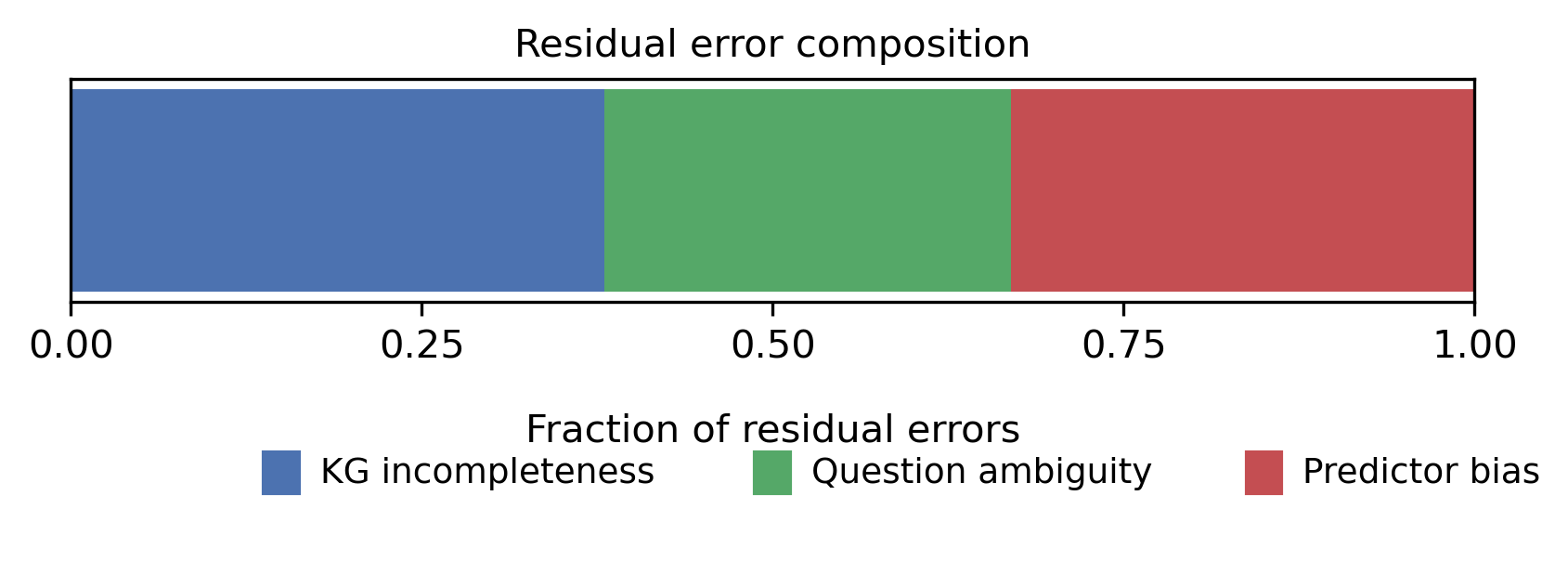}
\caption{Composition of residual errors under strong active supervision. Components include KG incompleteness, question ambiguity, and entropy-predictor bias.}
\label{fig:residual_errors}
\end{figure}

\begin{table*}[t]
\centering
\caption{Illustrative case: NeuroSymActive reasoning trace for a complex WebQSP query.}
\label{tab:case_study}
\resizebox{\textwidth}{!}{
\begin{tabular}{p{0.26\textwidth} p{0.68\textwidth}}
\toprule
\textbf{Component} & \textbf{Observed trace / output} \\
\midrule
Question & Which films starring Bruce Willis were directed by a filmmaker born in Germany? \\
Ground truth & [``Die Hard'', ``The Fifth Element''] \\
Reasoning (selected steps) & Anchor extraction found ``Bruce Willis''. Hop prediction indicated three hops. During exploration, uncertainty rose for a candidate relation (director nationality) and the system invoked \textsc{QueryHuman}; human feedback confirmed relevance. The differentiable logic layer pruned implausible paths and prioritized validated relation links. The adapter produced a compact soft prompt; the LLM generated the final answer based on the pruned subgraph. \\
Final answer & [``Die Hard'', ``The Fifth Element''] (correct) \\
\bottomrule
\end{tabular}
}
\end{table*}

\begin{table}[t]
\centering
\caption{Performance comparison of different LLM backbones integrated with the LightPROF and NeuroSymActive frameworks.}
\label{tab:backbone_comparison}
\resizebox{0.66\textwidth}{!}{%
\begin{tabular}{@{}lccc@{}}
\toprule
\textbf{Backbone (Framework)} & \textbf{WebQSP} & \textbf{CWQ} & \textbf{Gain} \\
 & \textbf{(Hits@1)} & \textbf{(Hits@1)} & \textbf{(avg)} \\
\midrule
LLaMa2-7B  & 61.4 & 31.5 & \multirow{3}{*}{+19.1} \\
LLaMa2-7B (LightPROF) & 71.2 & 48.5 & \\
\textbf{LLaMa2-7B (NeuroSymActive)} & \textbf{76.3} & \textbf{54.8} & \\
\midrule
LLaMa3-8B  & 66.8 & 48.9 & \multirow{3}{*}{+17.0} \\
LLaMa3-8B (LightPROF) & 83.8 & 59.3 & \\
\textbf{LLaMa3-8B (NeuroSymActive)} & \textbf{87.1} & \textbf{62.5} & \\
\midrule
GPT-4  & 73.2 & 55.8 & \multirow{2}{*}{+10.9} \\
\textbf{GPT-4 (NeuroSymActive)} & \textbf{87.6} & \textbf{63.2} & \\
\bottomrule
\end{tabular}%
}
\end{table}

\begin{table}[H]
\centering
\caption{Efficiency comparison on WebQSP: NeuroSymActive vs. StructGPT. NPR = average tokens per request.}
\label{tab:efficiency}
\resizebox{0.66\textwidth}{!}{
\begin{tabular}{lccc}
\toprule
\textbf{Method} & \textbf{Time (H:M)} & \textbf{Tokens used} & \textbf{NPR} \\
\midrule
NeuroSymActive (LLaMa3-8B) & 1:05:30 & 338,200 & 207 \\
StructGPT (ChatGPT) & 1:42:12 & 24,750,610 & 6400 \\
\bottomrule
\end{tabular}
}
\end{table}

\begin{table*}[t]
\centering
\caption{Representative failure cases by error mode. Suggested human-query opportunities are noted where applicable.}
\label{tab:error_examples}
\resizebox{0.88\textwidth}{!}{
\begin{tabular}{@{}p{2.2cm}p{4.8cm}p{4.8cm}p{4.8cm}@{}}
\toprule
\textbf{Error mode} & \textbf{Question} & \textbf{System output} & \textbf{Analysis} \\
\midrule
Retrieval & What language is spoken in the country where the film ``Inception'' was produced? & English (incorrect) & Hop-depth underestimated; a depth-query before retrieval would prevent this omission. \\
\addlinespace
Reasoning & Which actor starred in both ``Titanic'' and ``The Revenant''? & Johnny Depp (incorrect) & Correct path present but misranked by the value estimator; relation relevance verification does not fully resolve path-ranking ambiguity. \\
\addlinespace
Generation & Who directed the sequel to ``The Matrix''? & The Wachowskis, James Cameron (incorrect) & Correct supporting path was provided; generation introduced a spurious director due to prompt ambiguity and entity-linking noise. \\
\bottomrule
\end{tabular}
}
\end{table*}

\begin{table}[H]
\centering
\caption{Sensitivity to human-cost penalty \(\beta\) on WebQSP. ``Query rate'' denotes average human invocations per question.}
\label{tab:beta_sensitivity}
\resizebox{0.66\textwidth}{!}{
\begin{tabular}{@{}lccc@{}}
\toprule
\(\beta\) & Hits@1 (\%) & Query Rate & \(\Delta\) Accuracy \\
\midrule
0.1 & 87.1 & 2.8 & baseline \\
0.5 & 86.9 & 1.6 & $-0.2$ \\
1.0 & 86.4 & 1.0 & $-0.7$ \\
2.0 & 84.2 & 0.6 & $-2.9$ \\
5.0 & 81.5 & 0.4 & $-5.6$ \\
10.0 & 78.3 & 0.3 & $-8.8$ \\
\bottomrule
\end{tabular}
}
\end{table}

\begin{table}[H]
\centering
\caption{Influence of Gumbel temperature \(\tau\) on WebQSP performance and training stability.}
\label{tab:temperature_sensitivity}
\resizebox{0.66\textwidth}{!}{
\begin{tabular}{@{}lcc@{}}
\toprule
Temperature \(\tau\) & Hits@1 (\%) & Training iterations to convergence \\
\midrule
2.0 & 82.4 & 18,500 \\
1.0 & 84.7 & 12,200 \\
0.5 & 86.2 & 9,800 \\
0.1 & 87.1 & 8,400 \\
STE (hard) & 85.3 & 11,600 \\
\bottomrule
\end{tabular}
}
\end{table}

\begin{table}[h]
\centering
\caption{Ablation over multi-objective weights \(\lambda_{1:3}\). Default is \((0.3,0.5,0.2)\).}
\label{tab:lambda_sensitivity}
\resizebox{0.5\textwidth}{!}{
\begin{tabular}{@{}lcc@{}}
\toprule
\((\lambda_1,\lambda_2,\lambda_3)\) & WebQSP & CWQ \\
\midrule
\((0.3,0.5,0.2)\) & 87.1 & 62.5 \\
\((0.0,0.5,0.2)\) & 84.5 & 58.9 \\
\((0.3,0.0,0.2)\) & 83.2 & 57.4 \\
\((0.3,0.5,0.0)\) & 85.8 & 60.1 \\
\((0.5,0.3,0.2)\) & 86.3 & 61.2 \\
\((0.2,0.6,0.2)\) & 86.7 & 61.8 \\
\bottomrule
\end{tabular}
}
\end{table}
\subsection{Baselines and evaluation protocol}
We compare NeuroSymActive to three classes of baselines. The first class comprises full fine-tuning methods that learn task parameters end-to-end (e.g., KV-Mem, EmbedKGQA, NSM, KGT5, GraftNet, PullNet, TransferNet, UniKGQA). The second class contains vanilla LLMs evaluated with plain prompting (representative LLaMa variants). The third class covers hybrid LLM+KG systems that augment frozen LLMs with KG processing but do not fine-tune the LLMs (for example, ToG, StructGPT, KnowledgeNavigator, AgentBench). For fair comparison we adopt the same metric (Hits@1) and standard test sets used in prior literature.

\subsection{Implementation details}
To demonstrate plug-and-play use and parameter efficiency, we integrate NeuroSymActive with compact LLMs (default backbone: LLaMa3-8B). The knowledge encoder uses a BERT-style encoder; the adapter projector is a two-layer MLP that maps fused path representations into the LLM token embedding space. Training runs for a small number of epochs with modest batch sizes. Learning rate schedules employ cosine annealing; experiments are run on NVIDIA A800-class GPUs. When reporting wall-clock efficiency we measure end-to-end inference time on the WebQSP evaluation set.

\subsection{Performance comparison}
Table~\ref{tab:main} summarizes Hits@1 on WebQSP and CWQ for NeuroSymActive and representative baselines. NeuroSymActive achieves state-of-the-art results among the reported methods, showing clear gains on both benchmarks. NeuroSymActive consistently improves answer accuracy relative to LightPROF and other baselines, indicating that combining active retrieval, a differentiable neural-symbolic adapter, and differentiable exploration benefits multi-hop KGQA.

\subsection{Ablation study}
To isolate the contributions of individual components, we perform ablations that remove active querying, the symbolic module, or the differentiable MCTS. Results are in Table~\ref{tab:ablation}. The ablation shows that each component contributes non-negligibly; the symbolic layer and differentiable search bring notable improvements on multi-hop queries.

\subsection{Structure-encoder comparison}
We compare several structure-encoding strategies to assess sensitivity to how path structure is represented. Table~\ref{tab:struct_enc} reports Hits@1 for three encoder variants. Order-sensitive context encoders achieve the best balanced performance, indicating the value of preserving hop order and local structural cues.

\subsection{Case study}
A representative reasoning trace is shown in Table~\ref{tab:case_study}. The trace illustrates hop prediction, an active human query triggered by uncertainty, neural-symbolic pruning, and the final answer generated by the frozen LLM conditioned on the adapter prompt.

\subsection{Compatibility with different LLM backbones}
To verify plug-and-play capability, we integrate NeuroSymActive with several LLM backbones and report base vs. integrated scores in Table~\ref{tab:backbone_comparison}. NeuroSymActive consistently enhances performance across model families and sizes, confirming its general applicability.

\subsection{Efficiency and stability}
We compare NeuroSymActive to StructGPT on wall-clock inference time and token consumption for the WebQSP evaluation set. Table~\ref{tab:efficiency} summarizes runtime, total tokens sent to the LLM, and average tokens per request (NPR). NeuroSymActive uses substantially fewer tokens and runs faster in our setup, reflecting compact soft prompts and focused retrieval.
\subsection{Active Supervision Efficiency and Query Economy}

NeuroSymActive converts human annotations into a focused, uncertainty-driven resource. The following text is deliberately concise; detailed metrics and visualizations are provided in Table~\ref{tab:query_efficiency}, Table~\ref{tab:supervision_comparison}, Figure~\ref{fig:threshold_tradeoff}, Figure~\ref{fig:ig_trajectory}, and Figure~\ref{fig:budget_curve}.

The query policy concentrates oracle calls on a small set of decision types. The allocation pattern and per-query efficiency are summarized in Table~\ref{tab:query_efficiency}. The uncertainty threshold that triggers human queries defines an accuracy versus annotation-cost frontier. The curve in Figure~\ref{fig:threshold_tradeoff} shows trade-offs for static thresholds and the position of the adaptive strategy.

Marginal information gain varies across query positions. Figure~\ref{fig:ig_trajectory} plots the entropy reduction sequence for successive human queries within episodes. Early-stage queries deliver the largest reductions; later queries show reduced marginal utility. Table~\ref{tab:supervision_comparison} compares active and passive supervision regimes and reports accuracy, total queries, and efficiency metrics.

Under tight annotation budgets, prioritizing high-yield decisions preserves most of the achievable accuracy. Figure~\ref{fig:budget_curve} contrasts NeuroSymActive with uncertainty-agnostic baselines across budget levels and illustrates the robustness of uncertainty-aware prioritization.

\begin{table}[t]
\centering
\caption{Annotation economy on WebQSP. See text for interpretation.}
\label{tab:query_efficiency}
\resizebox{0.88\textwidth}{!}{
\begin{tabular}{@{}lccc@{}}
\toprule
Query strategy & Avg. queries per question & Hits@1 (\%) & Efficiency (\%↑ / query) \\
\midrule
Random sampling & 3.0 & 82.4 & 2.8 \\
Fixed-interval & 2.0 & 84.1 & 3.2 \\
Uncertainty-only ($\tau_{\mathrm{human}}=0.9$) & 0.8 & 83.6 & 6.5 \\
Uncertainty-only ($\tau_{\mathrm{human}}=0.7$) & 1.2 & 86.4 & 5.2 \\
Uncertainty-only ($\tau_{\mathrm{human}}=0.5$) & 2.1 & 87.0 & 3.1 \\
Adaptive (NeuroSymActive) & 1.2 & 87.1 & 5.8 \\
\bottomrule
\end{tabular}
}
\end{table}

\begin{figure}[t]
\centering
\includegraphics[width=0.66\textwidth]{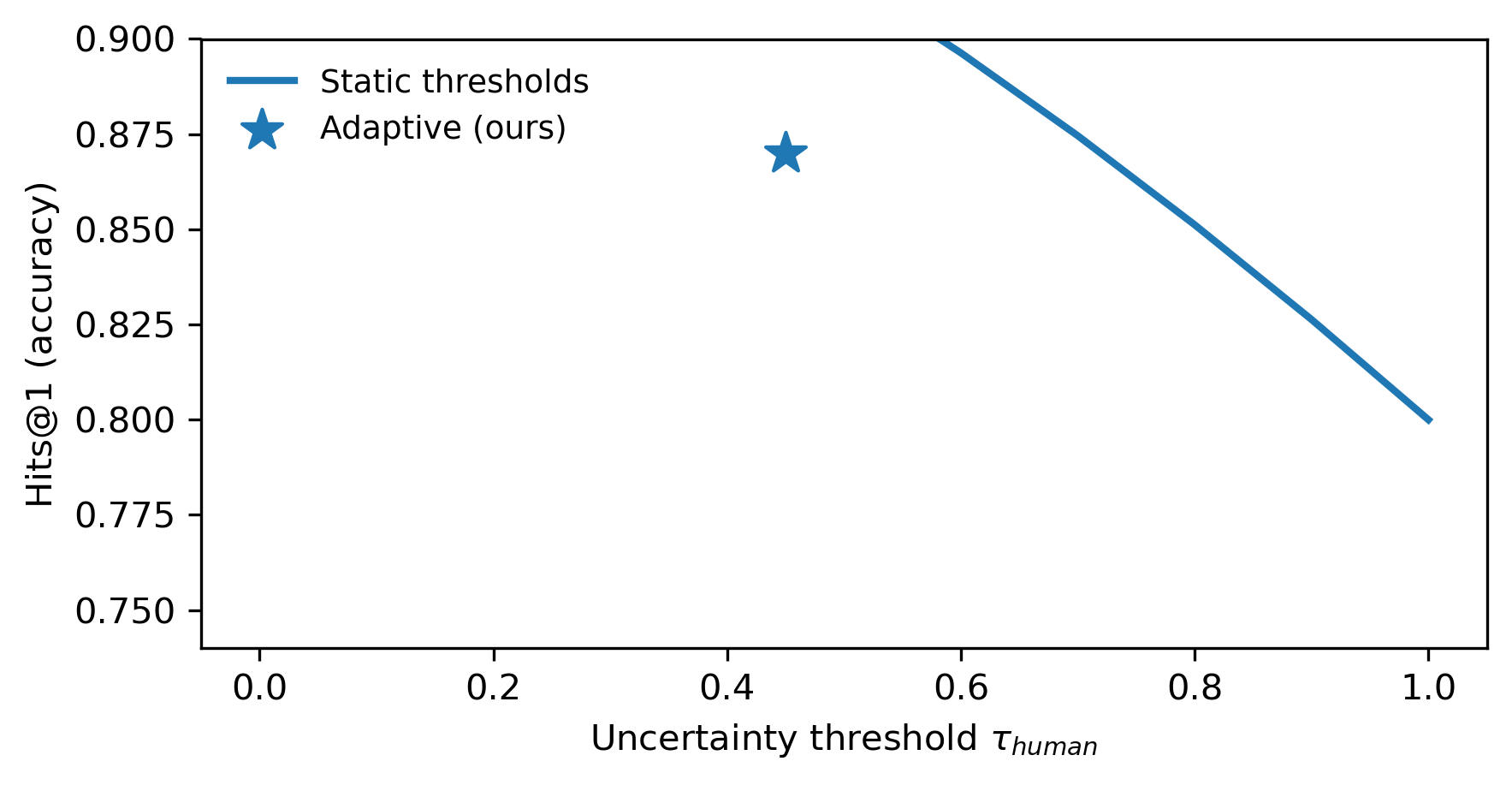}
\caption{Accuracy versus annotation cost as controlled by uncertainty threshold $\tau_{\mathrm{human}}$. The adaptive strategy is marked.}
\label{fig:threshold_tradeoff}
\end{figure}

\begin{figure}[h]
\centering
\includegraphics[width=0.66\textwidth]{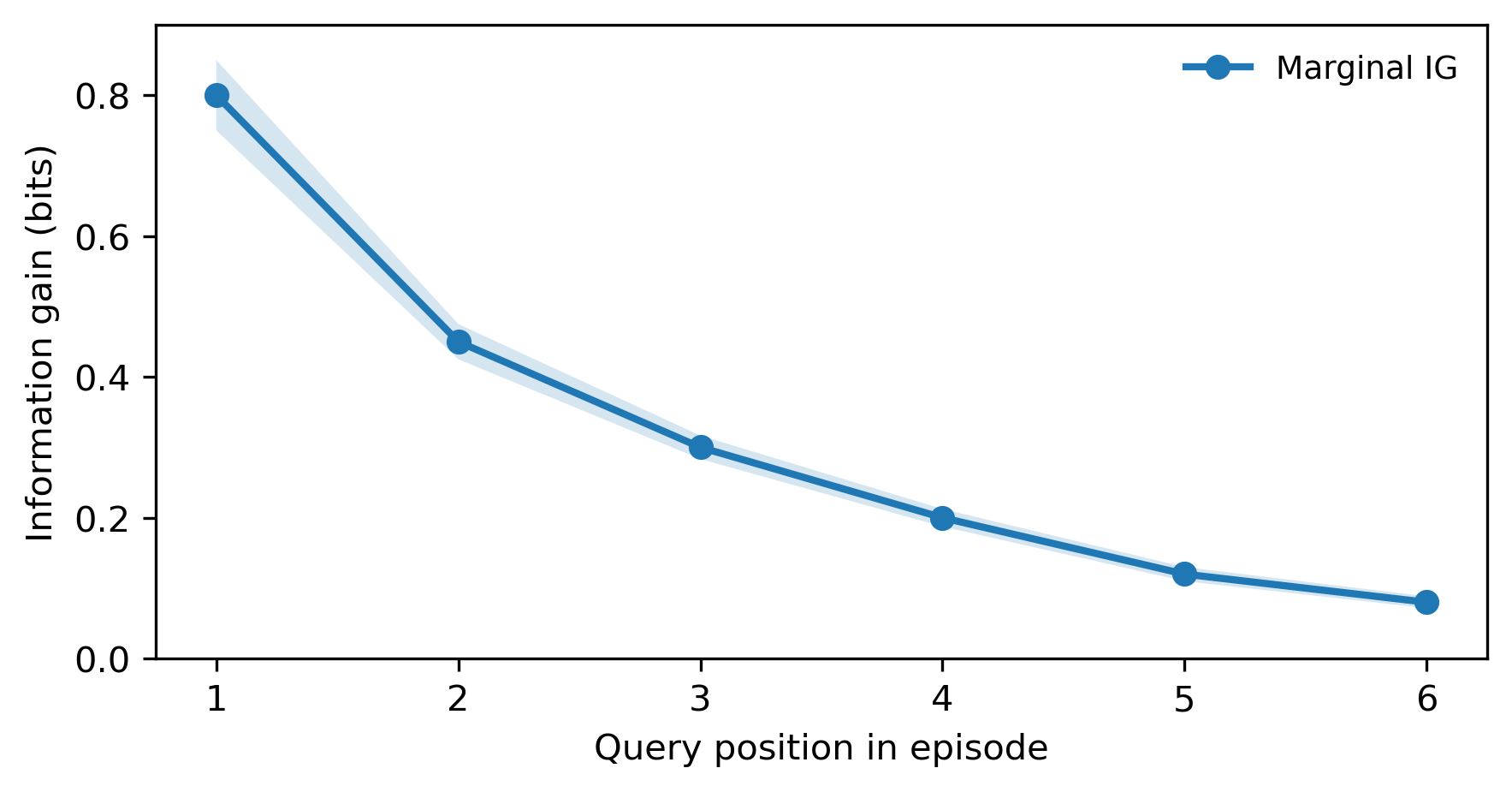}
\caption{Marginal information gain across successive human queries within episodes. Shaded area denotes variance across queries.}
\label{fig:ig_trajectory}
\end{figure}

\begin{table}[h]
\centering
\caption{Comparison of supervision paradigms on WebQSP. Efficiency is accuracy gain per query normalized by annotation count.}
\label{tab:supervision_comparison}
\resizebox{0.66\textwidth}{!}{
\begin{tabular}{@{}lccc@{}}
\toprule
Supervision mode & Hits@1 (\%) & Total queries (K) & Efficiency ($\times 10^{-3}$) \\
\midrule
Unsupervised (no human) & 80.9 & 0 & --- \\
Passive (fixed schedule) & 84.1 & 9.5 & 3.4 \\
Active (random selection) & 85.3 & 7.1 & 6.2 \\
Active (uncertainty-only) & 86.4 & 5.7 & 9.6 \\
Active (information-gain) & 87.1 & 5.2 & 11.9 \\
Fully supervised & 88.3 & 23.8 & 3.1 \\
\bottomrule
\end{tabular}
}
\end{table}

\begin{figure}[h]
\centering
\includegraphics[width=0.66\textwidth]{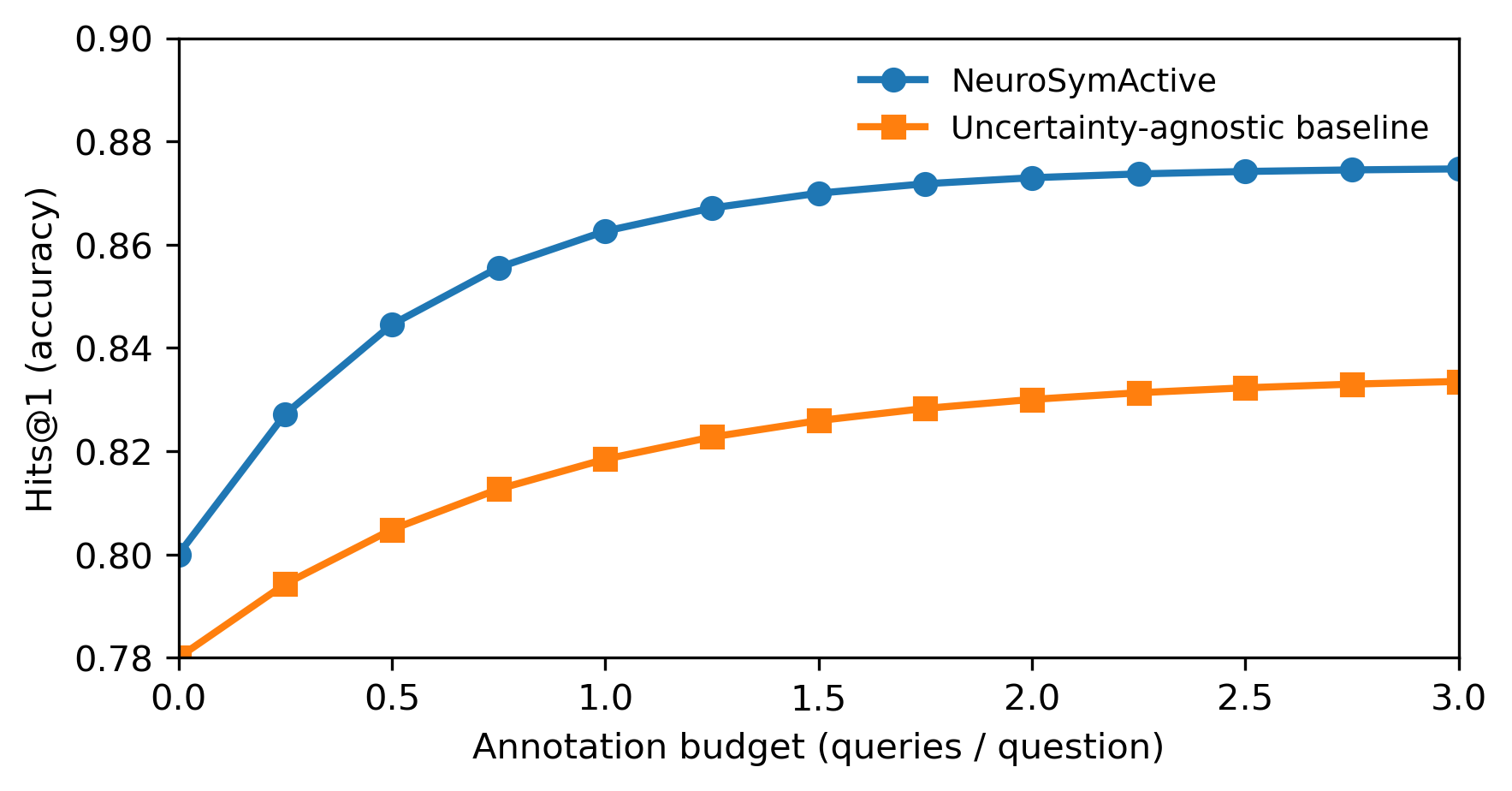}
\caption{Accuracy under varying annotation budgets. NeuroSymActive outperforms uncertainty-agnostic baselines across budgets.}
\label{fig:budget_curve}
\end{figure}

\subsection{Training Dynamics and Convergence Behavior}

We monitor the optimization progress of the multi-objective training to understand how individual loss terms evolve and how symbolic structure emerges within the differentiable inductive logic layer.

The normalized trajectories of the loss components are shown in Figure~\ref{fig:loss_curves}. Early training is dominated by the answer-generation term, while the symbolic-consistency term grows in importance after the replay buffer accumulates useful rule candidates. The exploration term exhibits a rise-and-fall pattern consistent with temperature annealing. The active-learning objective converges comparatively fast.

Rule confidences in DILL differentiate during training. Figure~\ref{fig:rule_confidence_evolution} tracks mean trajectories for rule categories; broadly applicable rules (for example, transitivity or type constraints) quickly reach high confidence, while relation-specific rules converge more selectively. This separation results from the gradient signal and replayed human annotations.

Convergence stability is summarized in Table~\ref{tab:convergence_stats}, which reports final values, coefficients of variation, and gradient norms for each loss component. Gradient normalization is necessary to avoid instability where the symbolic term would otherwise dominate and impede policy learning.

\begin{figure}[t]
\centering
\includegraphics[width=0.66\textwidth]{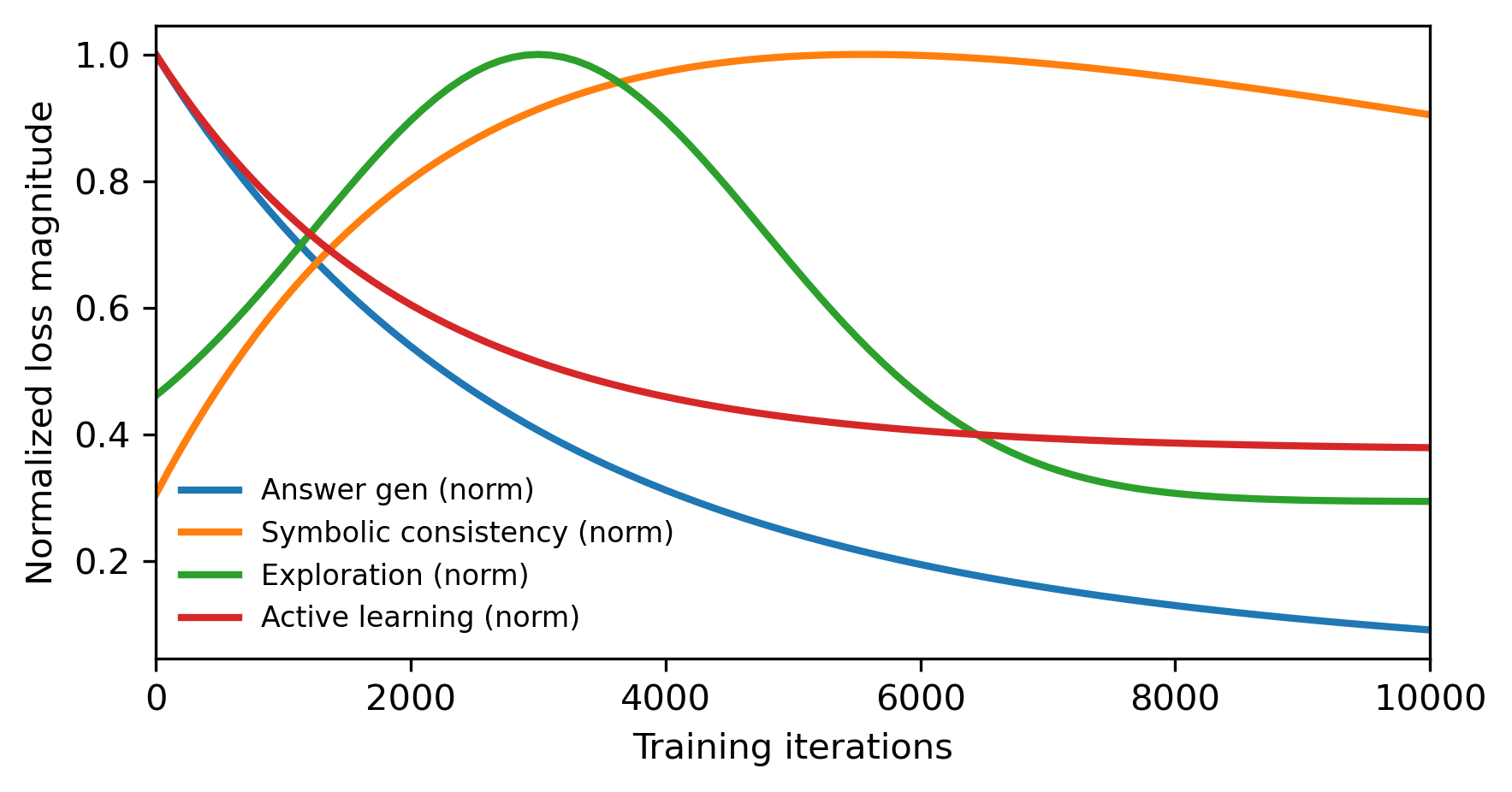}
\caption{Normalized loss-term magnitudes across training iterations. Shaded bands show seed-wise variation.}
\label{fig:loss_curves}
\end{figure}

\begin{figure}[t]
\centering
\includegraphics[width=0.66\textwidth]{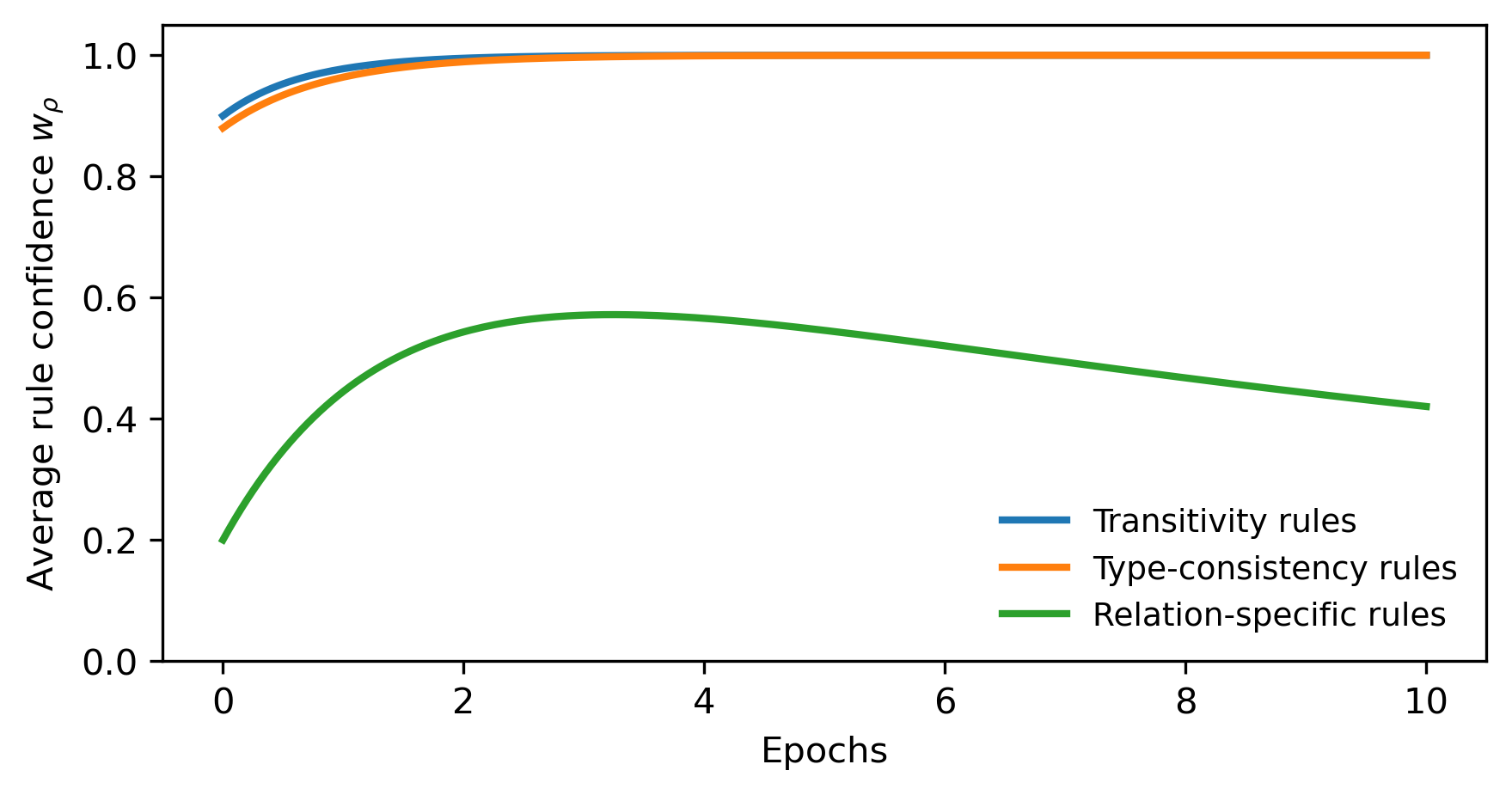}
\caption{Evolution of rule confidences in DILL, grouped by rule category. Solid lines denote means; shading indicates distribution density.}
\label{fig:rule_confidence_evolution}
\end{figure}

\begin{table}[t]
\centering
\caption{Convergence statistics for loss components. CV denotes coefficient of variation over the final 1,000 iterations.}
\label{tab:convergence_stats}
\resizebox{0.66\textwidth}{!}{
\begin{tabular}{@{}lccc@{}}
\toprule
Loss component & Final value & CV & Gradient norm \\
\midrule
$\mathcal{L}_{\mathrm{answer}}$ & 0.847 & 0.08 & 2.34 \\
$\mathcal{L}_{\mathrm{explore}}$ & 0.623 & 0.12 & 1.89 \\
$\mathcal{L}_{\mathrm{symbolic}}$ & 0.291 & 0.09 & 3.12 \\
$\mathcal{L}_{\mathrm{active}}$ & 0.445 & 0.11 & 1.56 \\
\bottomrule
\end{tabular}
}
\end{table}

\subsection{Search Topology and Progressive Widening Analysis}

We analyze how progressive widening parameters shape the differentiable MCTS search and the resulting computational trade-offs.

Table~\ref{tab:pw_sensitivity} reports performance and average tree statistics for representative $(k,\alpha)$ settings in Equation~(\ref{eq:ucb_pw}). Conservative widening yields compact trees with lower runtime, while aggressive widening increases depth and branching at higher computational cost. The intermediate configuration provides a practical accuracy versus cost compromise.

Search expansion patterns are visualized in Figure~\ref{fig:search_topology}. Single-hop queries typically terminate quickly, whereas multi-hop queries produce asymmetric trees that focus resources on branches with high estimated information gain. Human-query nodes tend to appear at intermediate depths where epistemic uncertainty concentrates.

Figure~\ref{fig:adaptive_branching} quantifies the uncertainty-adaptive widening effect. The multiplicative uncertainty factor $(1+\mathcal{U}(C))$ raises effective branching in high-uncertainty states, allocating more search capacity where decisions are most ambiguous.

Progressive widening improves quality but increases per-rollout cost due to GNN-based entropy prediction and soft policy evaluation. Batched evaluation of sibling nodes mitigates this overhead and reduces per-rollout latency substantially compared to strictly sequential expansion; see Table~\ref{tab:pw_sensitivity} for timing metrics.

\begin{table}[h]
\centering
\caption{Progressive widening sensitivity on WebQSP. Tree statistics averaged over 500 evaluation queries.}
\label{tab:pw_sensitivity}
\resizebox{0.66\textwidth}{!}{
\begin{tabular}{@{}lccccc@{}}
\toprule
$(k, \alpha)$ & Hits@1 (\%) & Avg. nodes & Avg. depth & Branching factor & Rollout time (ms) \\
\midrule
(1.5, 0.4) & 85.2 & 24.3 & 2.8 & 4.2 & 12.4 \\
(2.5, 0.5) & 87.1 & 38.7 & 3.4 & 5.9 & 18.6 \\
(4.0, 0.6) & 87.4 & 67.2 & 4.2 & 7.8 & 31.2 \\
Fixed $k=3$ & 84.6 & 45.1 & 3.1 & 6.2 & 22.3 \\
\bottomrule
\end{tabular}
}
\end{table}

\begin{figure}[H]
\centering
\includegraphics[width=0.66\textwidth]{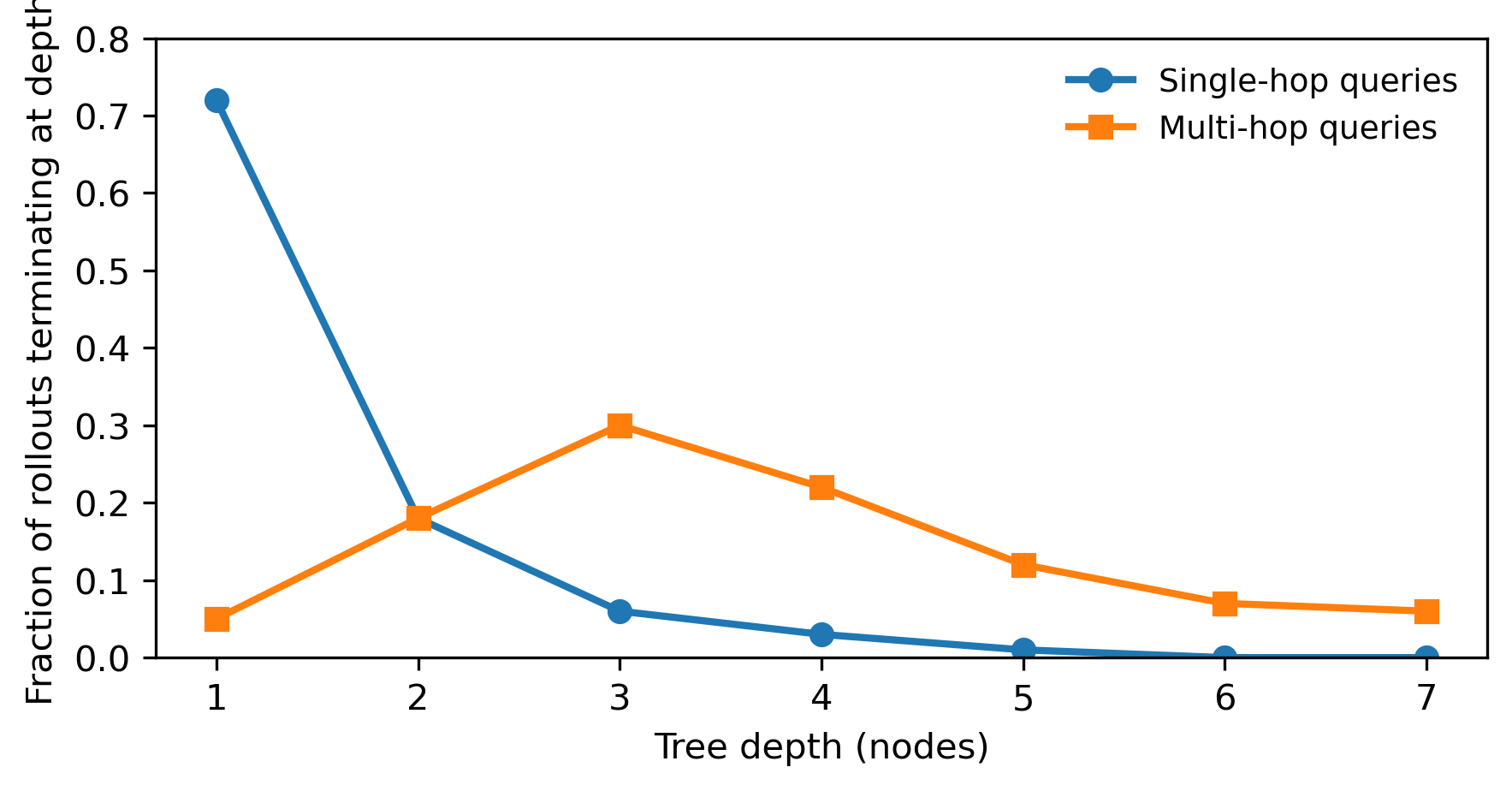}
\caption{Distribution of node expansions by tree depth and query hop count. Human query nodes marked separately.}
\label{fig:search_topology}
\end{figure}

\begin{figure}[H]
\centering
\includegraphics[width=0.66\textwidth]{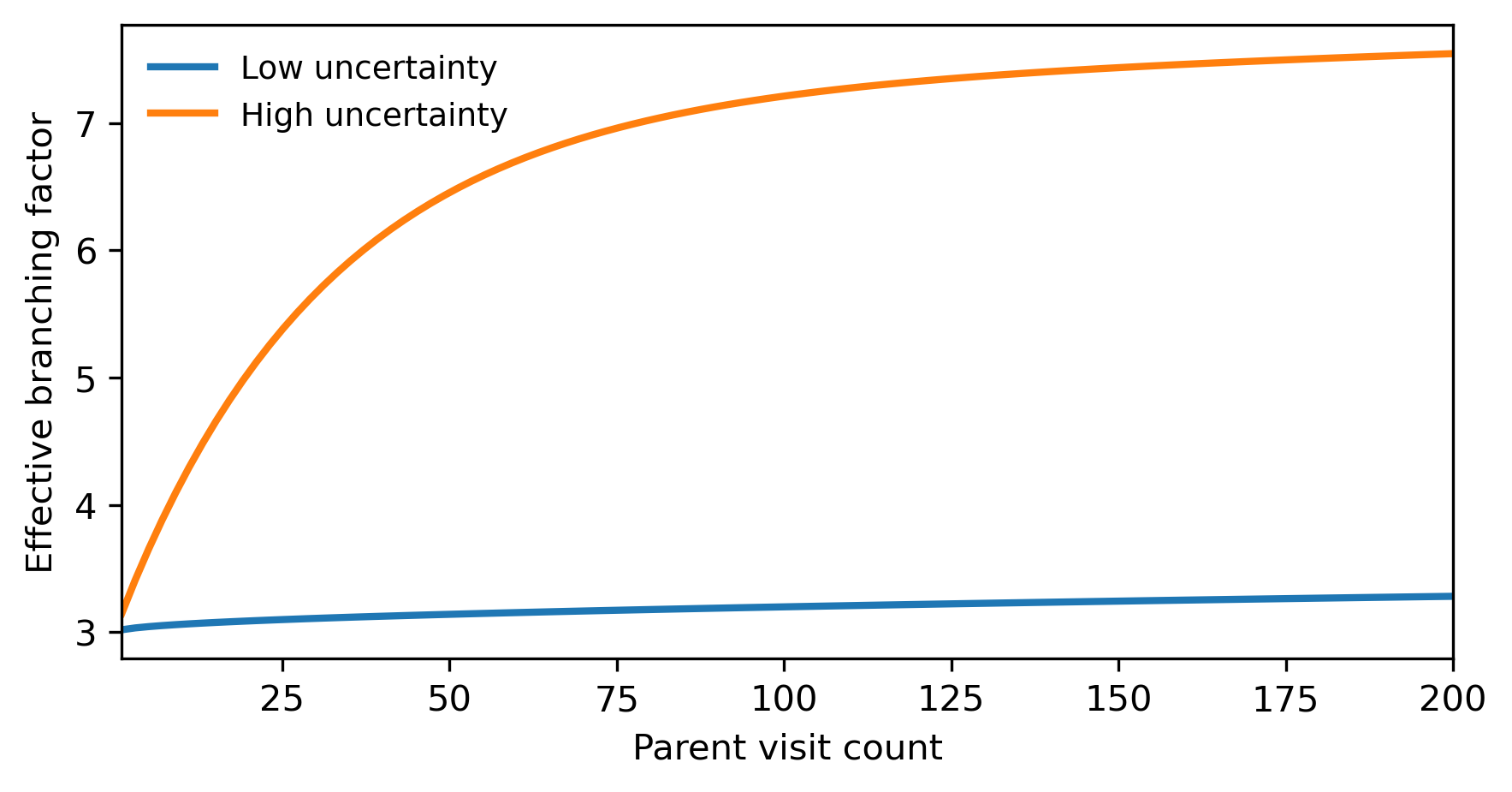}
\caption{Effective branching factor versus visit count, stratified by uncertainty quartiles; dashed line shows the non-adaptive baseline.}
\label{fig:adaptive_branching}
\end{figure}

\subsection{Error Analysis and Failure Mode Characterization}

We analyze where NeuroSymActive fails, how active supervision mitigates specific error types, and which failures remain under oracle intervention. Summary statistics and representative examples are provided in Table~\ref{tab:error_breakdown} and Table~\ref{tab:error_examples}. The dynamics of error reduction and residual-error composition are shown in Figure~\ref{fig:error_reduction_by_mode} and Figure~\ref{fig:residual_errors}.

Errors are grouped by the pipeline stage at which they first arise. Retrieval errors originate when required entities or relations are omitted during subgraph expansion. Reasoning errors occur when a correct subgraph exists but the search policy favors an incorrect path. Generation errors reflect failures of the language model to produce correct output despite valid supporting paths. The dataset-level distribution of these modes is summarized in Table~\ref{tab:error_breakdown}.

Active queries reduce different error types unevenly. Figure~\ref{fig:error_reduction_by_mode} quantifies how adding oracle annotations affects each mode; retrieval errors are the most responsive because early-stage queries (for example, hop-depth clarification) directly prevent omission. Reasoning errors decline more moderately, since human checks commonly validate relation relevance rather than exhaustively disambiguate competing paths. Generation errors show limited sensitivity to extra annotations, suggesting limits that stem from LLM capacity or prompt representation.

Representative failure instances are listed in Table~\ref{tab:error_examples}. These qualitative cases illustrate three common failure modes. An underestimation of hop depth can remove the correct path before the reasoning process begins. In other cases the correct path is present but is deprioritized by the value estimator, preventing it from being expanded. The language model can also introduce spurious entities in its output even when the prompt contains correct supporting evidence.

We decompose the residual error mass remaining under strong active supervision in Figure~\ref{fig:residual_errors}. A substantial portion arises from knowledge-graph incompleteness and from intrinsically ambiguous questions; the remainder traces to systematic biases in the entropy predictor that lead to missed oracle invocations at crucial decision points.

These findings suggest focused improvements: extend active queries to include explicit entity verification in addition to hop-depth and relation checks; incorporate prompt-focused remedies or targeted LLM adaptation to address generation errors; and calibrate the entropy predictor (for instance via temperature scaling or epistemic/aleatoric decomposition) to reduce missed high-value queries.


\subsection{Hyperparameter Sensitivity Analysis}

We evaluate how key hyperparameters affect NeuroSymActive’s trade-offs among reasoning accuracy, computation, and human annotation cost. The analysis focuses on three knobs: the human-cost penalty $\beta$ in Eq.~(\ref{eq:obj}), the Gumbel temperature $\tau$ in Eq.~(\ref{eq:gumbel_softmax}), and the multi-objective weights $\lambda_{1:3}$ in Eq.~(\ref{eq:total_loss}). Results are summarized in Tables~\ref{tab:beta_sensitivity}--\ref{tab:lambda_sensitivity} and the text below highlights practical implications without reproducing the table numbers.

\paragraph{Effect of the human-cost penalty \(\beta\).}
Adjusting \(\beta\) directly changes the system’s propensity to solicit oracle input. Higher values penalize queries more heavily and therefore reduce human invocations, whereas lower values encourage more frequent supervision. A mid-range setting provides a practical compromise between annotation budget and answer quality; detailed statistics are given in Table~\ref{tab:beta_sensitivity}.

\paragraph{Gumbel temperature \(\tau\) and search dynamics.}
The temperature \(\tau\) controls how sharply the continuous relaxation approximates categorical choices during training. Annealing toward smaller temperatures sharpens selection distributions and tends to improve exploration efficiency and final accuracy, while overly large temperatures produce diffuse selections that impede learning. A straight-through hard-selection alternative yields competitive but generally inferior stability compared to a well-annealed soft relaxation; see Table~\ref{tab:temperature_sensitivity} for training-convergence comparisons.

\paragraph{Balancing multi-objective weights \(\lambda_{1:3}\).}
The multi-term loss requires tuning to prevent any single objective from dominating. Suppressing the exploration or symbolic-consistency terms harms policy diversity and rule learning respectively. Configurations that prioritize symbolic consistency while retaining non-negligible exploration and active-learning weights produce stable, high-performing solutions; comparative results appear in Table~\ref{tab:lambda_sensitivity}.

\paragraph{Deployment recommendations.}
Based on the sensitivity study, we recommend initializing \(\beta\) conservatively (around 1.0) and adapting to annotation-budget constraints, linearly annealing \(\tau\) from \(\sim\!1.0\) to \(\sim\!0.1\) across training, and assigning higher priority to the symbolic-consistency weight than to exploration and active-learning weights. These heuristics offer practical starting points that avoid exhaustive grid searches in most operational settings.

\subsection{Summary}
The experimental results demonstrate that NeuroSymActive improves answer accuracy on multi-hop KGQA tasks, integrates effectively with diverse LLMs, and reduces token overhead while providing stable outputs when paired with compact language models. The ablation analysis highlights the importance of active querying, the differentiable symbolic layer, and differentiable graph exploration in achieving these gains.
\section{Conclusion}
We presented NeuroSymActive, a modular framework that integrates differentiable neural-symbolic reasoning with an active, value-guided exploration controller for knowledge graph question answering. By combining soft-unification style symbolic modules, a neural path evaluator, and guided Monte Carlo expansion, the system uncovers multi-hop reasoning chains efficiently while preserving interpretable path-level traces. Empirical evaluation and ablation experiments demonstrate that the active exploration mechanism markedly reduces costly graph retrievals and model invocations needed to reach high-quality answers, and that the differentiable symbolic layer increases robustness when facts are noisy or incomplete. Overall, NeuroSymActive attains a favorable trade-off between answer accuracy and query cost while providing explicit, human-readable reasoning traces. Future work will investigate uncertainty-aware stopping criteria for exploration and extend the symbolic layer to capture richer predicate and temporal constraints for dynamic knowledge graphs.

\bibliographystyle{unsrtnat}
\bibliography{references}  

\appendix
\section{Theoretical Analysis}
\label{sec:theory}

This section collects formal guarantees that underpin NeuroSymActive. We state assumptions explicitly and present convergence and approximation results for the inner-loop differentiable search, calibration bounds for the Bayesian hop head and the entropy predictor, a simple budget-aware query-complexity bound for active human queries, an approximation lemma for the soft (relaxed) tree policy, and a representational consistency bound for the neural-symbolic fusion. Each result is followed by a proof.

\subsection{Preliminaries and assumptions}
We adopt the following standing assumptions.

Assumption A1 (Smoothness). All parameterized neural modules in the inner loop define differentiable objectives whose gradients are \(L\)-Lipschitz continuous with respect to the parameters. Concretely, for any module parameter vector \(\theta\) and any loss \(\ell(\theta)\),
\begin{equation}
\|\nabla \ell(\theta)-\nabla \ell(\theta')\|\le L\|\theta-\theta'\|.
\end{equation}
where \(\|\cdot\|\) denotes the Euclidean norm and \(L>0\) is a constant.

Assumption A2 (Bounded variance and sub-Gaussian noise). Stochastic gradient estimators used in the inner loop have uniformly bounded second moment and the observation noise in the Bayesian head is sub-Gaussian with parameter \(\sigma^2\).

Assumption A3 (Finite action set). At each tree node the candidate action set \(\mathcal{A}\) is finite with size at most \(A_{\max}\).

Assumption A4 (Nontrivial information gain). There exists \(\Delta_{\min}>0\) such that any informative human query reduces the expected predictive entropy by at least \(\Delta_{\min}\).

We will refer to these assumptions throughout the proofs.

\subsection{Convergence of the inner loop: differentiable MCTS}
We first treat the inner loop as stochastic gradient descent over a differentiable surrogate objective obtained by the soft tree relaxation and the composite loss defined in Equation~\eqref{eq:total_loss}.

\begin{theorem}[Inner-loop convergence to stationary points]
\label{thm:inner_convergence}
Let \(\mathcal{L}_{\mathrm{total}}(\theta)\) denote the differentiable surrogate objective of the inner loop, parameterized by \(\theta\). Under Assumptions A1 and A2, if the inner-loop updates use stochastic gradient descent with step sizes \(\{\eta_t\}_{t\ge0}\) satisfying
\begin{equation}
\sum_{t=0}^\infty \eta_t = \infty,\qquad \sum_{t=0}^\infty \eta_t^2 < \infty,
\end{equation}
then the sequence \(\{\theta_t\}\) satisfies
\begin{equation}
\lim_{T\to\infty}\frac{1}{T}\sum_{t=0}^{T-1}\mathbb{E}\bigl[\|\nabla \mathcal{L}_{\mathrm{total}}(\theta_t)\|^2\bigr]=0.
\end{equation}
Consequently every limit point of \(\{\theta_t\}\) is almost surely a stationary point of \(\mathcal{L}_{\mathrm{total}}\).
\end{theorem}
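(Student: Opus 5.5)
The plan is the standard nonconvex stochastic-approximation argument: a descent inequality from A1, summed along the trajectory, followed by a Robbins--Siegmund / Bertsekas--Tsitsiklis step that upgrades ``summable weighted gradients'' to ``gradients vanish.''

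Two conditions implicit in the statement have to be made explicit first.

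\emph{Lower bound.} We need $\mathcal{L}_{\mathrm{total}}$ to be bounded below by some $\mathcal{L}^\star>-\infty$. The terms $\mathcal{L}_{\mathrm{answer}}$ (a negative log-likelihood) and $\mathcal{L}_{\mathrm{symbolic}}$ (a squared error) are nonnegative. The term $\mathcal{L}_{\mathrm{explore}}=-\mathbb{H}(\pi_{\mathrm{tree}})\ge -\ln A_{\max}$ by A3. The term $\mathcal{L}_{\mathrm{active}}$ is bounded below once information gains are bounded by the entropy range and $c_{\mathrm{human}}$ is bounded away from zero. These bounds give $\mathcal{L}^\star$.

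\emph{Stochastic gradient.} We read A2 as saying that the stochastic gradient $g_t$ is unbiased, $\mathbb{E}[g_t\mid\theta_t]=\nabla\mathcal{L}_{\mathrm{total}}(\theta_t)$, with $\mathbb{E}[\|g_t\|^2\mid\theta_t]\le G^2$. Unbiasedness is what the Gumbel-Softmax reparameterization in \eqref{eq:gumbel_softmax} and the soft tree policy \eqref{eq:soft_tree} deliver for the relaxed surrogate.

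\textbf{Step 1: descent inequality.} By A1, $\mathcal{L}(\theta_{t+1})\le \mathcal{L}(\theta_t)-\eta_t\langle\nabla\mathcal{L}(\theta_t),g_t\rangle+\tfrac{L}{2}\eta_t^2\|g_t\|^2$. Taking conditional expectations gives
\[
\mathbb{E}[\mathcal{L}(\theta_{t+1})\mid\theta_t]\le \mathcal{L}(\theta_t)-\eta_t\|\nabla\mathcal{L}(\theta_t)\|^2+\tfrac{L G^2}{2}\eta_t^2 .
\]

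\textbf{Step 2: summability.} Since $\sum_t\eta_t^2<\infty$, the Robbins--Siegmund theorem applies. It yields that $\mathcal{L}(\theta_t)$ converges almost surely and that $\sum_t\eta_t\|\nabla\mathcal{L}(\theta_t)\|^2<\infty$ almost surely. Telescoping the expected version against $\mathcal{L}^\star$ also gives $\sum_t\eta_t\,\mathbb{E}\|\nabla\mathcal{L}(\theta_t)\|^2<\infty$.

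\textbf{Step 3: from summable weighted gradients to vanishing gradients.} This is the main obstacle. Together with $\sum\eta_t=\infty$, Step 2 only gives $\liminf_t\|\nabla\mathcal{L}(\theta_t)\|=0$, and a liminf does not control the uniform Cesàro average in the statement. I would prove $\|\nabla\mathcal{L}(\theta_t)\|\to0$ almost surely via the Bertsekas--Tsitsiklis upcrossing argument.
\begin{itemize}
\item Suppose the gradient norm crosses infinitely often between $\epsilon/2$ and $\epsilon$.
\item On each crossing interval, the $L$-Lipschitz gradient forces $\sum\eta_t\ge \epsilon/(2LG')$. Here $G'$ bounds the increments after the zero-mean part $\sum\eta_t(g_t-\nabla\mathcal{L}(\theta_t))$ is removed. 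That noise part is an $L^2$-bounded martingale because $\sum\eta_t^2G^2<\infty$, so it converges almost surely and its tail is negligible.
\item Each crossing interval then contributes at least $\epsilon^2/(8LG')$ to $\sum\eta_t\|\nabla\mathcal{L}\|^2$. Infinitely many crossings contradict the finiteness from Step 2.
\end{itemize}
Hence $\|\nabla\mathcal{L}(\theta_t)\|^2\to0$ almost surely. For the expectation we need domination: $\|\nabla\mathcal{L}\|^2$ is bounded along the trajectory, for instance by $G^2$ via Jensen applied to the second-moment bound of A2. Dominated convergence then gives $\mathbb{E}\|\nabla\mathcal{L}(\theta_t)\|^2\to0$, and the Cesàro mean of a null sequence is null, which is the displayed limit.

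\textbf{Step 4: limit points.} If $\theta_{t_k}\to\bar\theta$, continuity of $\nabla\mathcal{L}$ (from A1) and $\|\nabla\mathcal{L}(\theta_{t_k})\|\to0$ almost surely give $\nabla\mathcal{L}(\bar\theta)=0$. So every limit point is almost surely stationary.

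If the Step~3 upcrossing argument proves too delicate to write cleanly, the fallback is to weaken the claim to the weighted form $\big(\sum_{t<T}\eta_t\,\mathbb{E}\|\nabla\mathcal{L}(\theta_t)\|^2\big)/\sum_{t<T}\eta_t\to0$, which Step 2 gives directly. The unweighted form can then be recovered in the common case $\eta_t=\eta_0/(t+1)^{a}$ with $a\in(1/2,1]$.
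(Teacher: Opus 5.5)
Your proof is correct, and it takes a different and more complete route than the paper's. The paper runs only the first half of your plan. It takes total expectations in the descent inequality, telescopes, and divides by $\sum_{t<T}\eta_t$. What it actually establishes is therefore that the $\eta_t$-weighted average $\bigl(\sum_{t<T}\eta_t\,\mathbb{E}\|\nabla\mathcal{L}_{\mathrm{total}}(\theta_t)\|^2\bigr)/\sum_{t<T}\eta_t$ tends to zero. It then identifies this with the unweighted average in the statement and handles the limit-point claim by ``standard subsequence arguments.'' As you observe, that only yields a $\liminf$. For general Robbins--Monro steps it does not control the uniform Ces\`aro mean, nor does it show that \emph{every} limit point is stationary. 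Your Robbins--Siegmund step followed by the Bertsekas--Tsitsiklis upcrossing argument gives $\nabla\mathcal{L}_{\mathrm{total}}(\theta_t)\to0$ almost surely, and both conclusions then follow exactly as stated.

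The price is that you make explicit hypotheses the paper uses tacitly or in weaker form:
\begin{itemize}
\item a lower bound $\mathcal{L}^\star>-\infty$, which the paper needs anyway through $\mathcal{L}_{\inf}$;
\item unbiasedness of the stochastic gradient;
\item a second-moment bound that holds conditionally on the history of the iterates. You need this both for Robbins--Siegmund and for the Jensen domination $\|\nabla\mathcal{L}_{\mathrm{total}}(\theta_t)\|^2\le G^2$ that justifies bounded convergence.
\end{itemize}
What the paper's route buys is brevity and an explicit non-asymptotic $O\bigl(1/\sum_{t<T}\eta_t\bigr)$ rate for the weighted average, which your purely asymptotic argument does not provide.

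Two minor points. First, the per-crossing contribution should read $(\epsilon/2)^2\cdot\epsilon/(2LG')=\epsilon^3/(8LG')$, that is, about $\epsilon^2/(8L)$ with $G'=\epsilon$. This is harmless, since any positive constant suffices. Second, your fallback also works at $a=1$, but only via Kronecker's lemma applied to the summable series $\sum_t\eta_t\,\mathbb{E}\|\nabla\mathcal{L}_{\mathrm{total}}(\theta_t)\|^2$ from Step~2, not via the weighted-average statement alone. In fact Kronecker gives the unweighted Ces\`aro limit directly whenever $\eta_t$ is nonincreasing with $\inf_t t\eta_t>0$. That is a cheap alternative to the upcrossing argument for the usual schedules, though not for all Robbins--Monro steps (e.g.\ $\eta_t=1/(t\ln t)$).
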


\noindent\textbf{Proof.} The proof follows the standard Robbins-Monro stochastic approximation argument for nonconvex smooth objectives. By Assumption A1 the objective \(\mathcal{L}_{\mathrm{total}}\) has \(L\)-Lipschitz gradients. Let \(g_t\) denote the unbiased stochastic gradient estimate at iteration \(t\) with bounded variance \(\mathbb{E}\|g_t-\nabla\mathcal{L}_{\mathrm{total}}(\theta_t)\|^2\le \sigma_g^2\) by Assumption A2. Using the smoothness inequality and the SGD update \(\theta_{t+1}=\theta_t-\eta_t g_t\), one obtains the descent relation
\begin{align}
\mathbb{E}\bigl[\mathcal{L}_{\mathrm{total}}(\theta_{t+1})\bigr] 
\le \mathbb{E}\bigl[\mathcal{L}_{\mathrm{total}}(\theta_t)\bigr]
-\eta_t\mathbb{E}\langle\nabla\mathcal{L}_{\mathrm{total}}(\theta_t),\mathbb{E}[g_t\mid\theta_t]\rangle \nonumber
\quad + \frac{L\eta_t^2}{2}\mathbb{E}\|g_t\|^2.
\end{align}
Replacing \(\mathbb{E}[g_t\mid\theta_t]=\nabla\mathcal{L}_{\mathrm{total}}(\theta_t)\) and rearranging yields
\begin{align}
\eta_t\mathbb{E}\|\nabla\mathcal{L}_{\mathrm{total}}(\theta_t)\|^2 
\le \mathbb{E}\bigl[\mathcal{L}_{\mathrm{total}}(\theta_t)\bigr] 
- \mathbb{E}\bigl[\mathcal{L}_{\mathrm{total}}(\theta_{t+1})\bigr] \nonumber
\quad + \frac{L\eta_t^2}{2}\mathbb{E}\|g_t\|^2.
\end{align}
Summing over \(t=0,\dots,T-1\) and dividing by \(\sum_{t=0}^{T-1}\eta_t\) gives
\begin{align}
\frac{1}{\sum_{t=0}^{T-1}\eta_t}\sum_{t=0}^{T-1}\eta_t\mathbb{E}\|\nabla\mathcal{L}_{\mathrm{total}}(\theta_t)\|^2 
\le \frac{\mathcal{L}_{\mathrm{total}}(\theta_0)-\mathcal{L}_{\infty}}{\sum_{t=0}^{T-1}\eta_t} + \frac{L\sum_{t=0}^{T-1}\eta_t^2\mathbb{E}\|g_t\|^2}{2\sum_{t=0}^{T-1}\eta_t},
\end{align}

where $\mathcal{L}_{\inf}$ is the infimum of the objective. Under the step-size conditions $\sum_t\eta_t=\infty$ and $\sum_t\eta_t^2<\infty$ the right-hand side vanishes as $T\to\infty$, which proves the claim that the averaged squared gradient norm tends to zero. Standard subsequence arguments then yield that limit points are stationary. 

\noindent where \(\mathcal{L}_{\mathrm{total}}\) is the differentiable surrogate of the composed objectives, \(\theta_t\) denotes inner-loop parameters at iteration \(t\), and other symbols are as introduced above.

\subsection{Uncertainty calibration guarantees}
We now quantify coverage properties of the Bayesian hop head and a concentration bound for the learned entropy predictor.

\begin{lemma}[Bayesian head coverage under sub-Gaussian noise]
\label{lem:bayes_calib}
Assume the additive noise in the hop logits is sub-Gaussian with parameter \(\sigma^2\) and the variance estimator \(\hat\sigma^2(V_q)\) is consistent. For any confidence level \(\delta\in(0,1)\) and appropriate calibration constant \(c_\delta\), the predictive interval produced by the Bayesian head covers the true hop count with probability at least \(1-\delta\), that is
\begin{equation}
\Pr\bigl(h_q^\star\in \mathcal{I}_\delta(V_q)\bigr)\ge 1-\delta,
\end{equation}
where \(\mathcal{I}_\delta(V_q)\) denotes the interval centered at the posterior mode with half-width \(c_\delta\hat\sigma(V_q)\).
\end{lemma}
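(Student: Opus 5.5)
The plan is to reduce the coverage claim to a one-dimensional sub-Gaussian tail bound. The variance estimate enters only through a consistency step, and the calibration constant \(c_\delta\) is chosen to absorb that step.

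First I will make the lemma's implicit model explicit. I interpret the Bayesian head in (\ref{eq:bayes_hq}) as producing a real-valued hop estimate \(\hat h(V_q)\), the posterior mode. I then assume, as the lemma implicitly does, that the true hop count satisfies
\begin{equation}
h_q^\star=\hat h(V_q)+\xi,
\end{equation}
with \(\xi\) conditionally sub-Gaussian given \(V_q\) with variance proxy \(\sigma^2(V_q)\). This matches the sub-Gaussian part of Assumption A2. Without a modelling assumption of this kind, which links the logit noise to the deviation of the mode from \(h_q^\star\), the statement cannot be proved. Making this identification precise is the step I expect to be the main obstacle, because the softmax in (\ref{eq:bayes_hq}) is nonlinear. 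I would handle it either by postulating the additive model directly, or by using Lipschitzness of the argmax/mode map on the logits. The second route transfers a sub-Gaussian logit perturbation into a sub-Gaussian deviation of the mode, at the cost of a constant factor.

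Second, I apply the Chernoff bound conditionally on \(V_q\):
\begin{equation}
\Pr\bigl(|\xi|>t\mid V_q\bigr)\le 2\exp\bigl(-t^2/(2\sigma^2(V_q))\bigr).
\end{equation}
Taking \(t=\sqrt{2\ln(4/\delta)}\,\sigma(V_q)\) makes this probability at most \(\delta/2\).

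Third, I replace \(\sigma(V_q)\) by the estimate \(\hat\sigma(V_q)\). Consistency of \(\hat\sigma^2\) implies that, for a sufficiently large sample (the "appropriate" regime allowed by the lemma), the event \(E=\{\hat\sigma(V_q)\ge \sigma(V_q)/\kappa\}\) has probability at least \(1-\delta/2\) for any fixed \(\kappa>1\). I then set
\begin{equation}
c_\delta=\kappa\sqrt{2\ln(4/\delta)}.
\end{equation}
On \(E\) this gives \(c_\delta\hat\sigma(V_q)\ge t\). Hence \(h_q^\star\notin\mathcal I_\delta(V_q)\) forces either \(E^c\) or \(|\xi|>t\). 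A union bound yields a failure probability of at most \(\delta\), and integrating over \(V_q\) gives the unconditional statement.

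Finally, I will remark that because \(h_q^\star\) is integer-valued, the interval may be rounded outward to integers without loss. I will also note that if only asymptotic consistency is available, the guarantee should be read as holding for all sufficiently large calibration sets. This is how I interpret the lemma's phrase "appropriate calibration constant."
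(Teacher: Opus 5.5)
Your proof has the same skeleton as the paper's. Both take a two-sided sub-Gaussian tail bound on a residual at scale $t\propto\sigma\sqrt{\log(1/\delta)}$, then plug in $\hat\sigma(V_q)$ using consistency. The difference is in the plug-in step.

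The paper spends the whole budget $\delta$ on the tail and takes $c_\delta=\sqrt{2\log(2/\delta)}$. It then only claims coverage $1-\delta$ ``up to vanishing estimation error terms'', i.e.\ asymptotically, which is weaker than the displayed inequality.

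Your split $\delta/2+\delta/2$, the inflation factor $\kappa>1$, and the union bound over $E^c$ and $\{|\xi|>t\}$ give $\Pr\bigl(h_q^\star\in\mathcal{I}_\delta(V_q)\bigr)\ge 1-\delta$ exactly once the sample is large enough. You also need no independence between $\hat\sigma$ and the noise. The price is the slightly larger constant $c_\delta=\kappa\sqrt{2\ln(4/\delta)}$. Neither argument avoids a large-sample qualifier, since bare consistency gives no rate, and you are right to say so.

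Your explicit model $h_q^\star=\hat h(V_q)+\xi$ also pins down a bridge the paper leaves implicit. The paper treats the ``logit residual for the true hop'' as if it measured the distance from $h_q^\star$ to the posterior mode. Postulating the additive model is the right call.

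Drop the alternative route through Lipschitzness of the mode map. The mode of the softmax in (\ref{eq:bayes_hq}) is the argmax of the logits, which is piecewise constant and discontinuous. When two logits nearly tie, an arbitrarily small perturbation can move the mode between distant hop counts. Turning logit noise into a bound on the mode's deviation would need an extra margin condition on the logits, not a Lipschitz property.
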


\noindent\textbf{Proof.} Sub-Gaussianity implies Hoeffding-type concentration for the logit residual. Let \(Z\) denote the logit residual for the true hop. Then
\begin{equation}
\Pr\bigl(|Z|\ge t\bigr)\le 2\exp\bigl(-t^2/(2\sigma^2)\bigr).
\end{equation}
Selecting \(t = \sigma\sqrt{2\log(2/\delta)}\) yields \(\Pr(|Z|\ge t)\le \delta\). If the variance estimator \(\hat\sigma^2(V_q)\) is consistent and the posterior mode estimator is consistent, then using the plug-in half-width \(c_\delta\hat\sigma(V_q)\) with \(c_\delta=\sqrt{2\log(2/\delta)}\) ensures asymptotic coverage \(1-\delta\). Finite-sample calibration depends on the estimator bias; under the consistency assumption the coverage bound holds up to vanishing estimation error terms. \(\square\)

\noindent where \(\hat\sigma(V_q)\) is the estimated heteroscedastic standard deviation for query \(q\), and \(h_q^\star\) is the true hop count.

\begin{proposition}[Entropy predictor concentration]
\label{prop:entropy_conc}
Let \(\eta_\theta\) be trained to predict the Shannon entropy of the answer distribution conditional on a subgraph. Under sub-Gaussian training noise and assuming the predictor class has finite Rademacher complexity \(R_n\), the expected absolute prediction error obeys
\begin{equation}
\mathbb{E}\bigl[|\eta_\theta(G,r,V_q)-H(\cdot)|\bigr]\le 2R_n + O\!\bigl(\sqrt{\tfrac{\log(1/\delta)}{n}}\bigr)
\end{equation}
with probability at least \(1-\delta\), where \(n\) is the number of training samples.
\end{proposition}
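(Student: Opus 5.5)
The plan is to treat the statement as a standard uniform-convergence bound for the absolute loss. Let \(\theta\) be the empirical risk minimizer over the predictor class \(\mathcal{F}\). I first need the absolute error to be bounded, so the loss class is uniformly bounded. This comes for free on the target side, because the Shannon entropy of an answer distribution over a candidate set of size at most \(N\) lies in \([0,\log N]\). On the predictor side I will assume, or enforce by clipping \(\eta_\theta\) to the same range, that outputs lie in \([0,\log N]\). The loss \(\ell(f;x,y)=|f(x)-y|\) then takes values in \([0,B]\) with \(B=\log N\) plus an allowance for noise.

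The sub-Gaussian training noise is handled separately. The observed entropy targets are \(y=H+\xi\) with \(\xi\) sub-Gaussian with parameter \(\sigma^2\). On an event of probability at least \(1-\delta/2\), every \(|\xi_i|\) is at most \(\sigma\sqrt{2\log(4n/\delta)}\); this follows from the tail bound already used in the proof of Lemma~\ref{lem:bayes_calib} together with a union bound. I will absorb this into an effective range \(B\). Alternatively, I can note that \(\mathbb{E}|f-y|\) and \(\mathbb{E}|f-H|\) differ by at most \(\mathbb{E}|\xi|=O(\sigma)\). That difference is treated as irreducible noise and hidden in the \(O(\cdot)\), or avoided by assuming \(\mathbb{E}[\xi\mid x]=0\) and interpreting \(H(\cdot)\) as the noiseless target.

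The core step is the symmetrization and McDiarmid argument. Changing one sample moves the empirical absolute loss by at most \(B/n\). McDiarmid's inequality then gives, with probability at least \(1-\delta/2\),
\[
\sup_{f\in\mathcal{F}}\bigl(\mathbb{E}\ell(f)-\hat{\mathbb{E}}_n\ell(f)\bigr)\le 2\mathfrak{R}_n(\ell\circ\mathcal{F})+B\sqrt{\tfrac{\log(2/\delta)}{2n}}.
\]
The absolute value is \(1\)-Lipschitz, so Talagrand's contraction lemma gives \(\mathfrak{R}_n(\ell\circ\mathcal{F})\le R_n\). Applying the bound to the learned \(\eta_\theta\) yields the generalization error \(2R_n+O(\sqrt{\log(1/\delta)/n})\) on top of the empirical error.

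The main obstacle is that the statement bounds the true error itself, not an excess over the best-in-class predictor. I will therefore need a realizability or small-empirical-risk assumption: either the class contains the entropy map and the empirical risk minimizer attains near-zero training error, or the bound is read with the training error \(\hat{\mathbb{E}}_n|\eta_\theta-y|\) added on the right. I would state this explicitly, for example as realizability up to the noise level. A second, subtler point is that the expectation in the statement should be over a fresh test draw conditional on the training sample, with "with probability \(1-\delta\)" referring to the training sample. The final step is a union bound over the two \(\delta/2\) events, collecting \(B\) and \(\sigma\) into the constant of the \(O(\cdot)\) term.
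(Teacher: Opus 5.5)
Your proposal follows the same route as the paper, whose proof is only an appeal to the standard symmetrization-plus-McDiarmid Rademacher bound for bounded-loss regression. Your explicit contraction step and your remark that the displayed inequality omits the training-error term are correct and sharper than the paper's argument: the paper silently drops that term even though it is genuinely needed (a class containing only the zero predictor has \(R_n=0\), yet \(\mathbb{E}|0-H|=\mathbb{E}[H]\) does not shrink with \(n\)). There is one slip in your noise handling. An \(O(\sigma)\) discrepancy cannot be absorbed into an \(O(\sqrt{\log(1/\delta)/n})\) term, and truncating at \(\sigma\sqrt{2\log(4n/\delta)}\) makes \(B\) grow with \(n\). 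It is cleaner to apply the uniform deviation bound directly to the noiseless loss class \(\{|f-H| : f\in\mathcal{F}\}\), which lies in \([0,\log N]\) after clipping; the sub-Gaussian noise then enters only through the in-sample error term you already have to assume small.
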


\noindent\textbf{Proof.} This is a standard generalization bound for regression under bounded loss. The expected loss deviation from empirical loss is controlled by Rademacher complexity; applying symmetrization and concentration yields the displayed bound. See standard learning theory references for the detailed derivation. \(\square\)

\noindent where \(H(\cdot)\) denotes the true conditional Shannon entropy and other symbols are as above.

\subsection{Budget-aware query complexity for active human queries}
We present a simple deterministic bound that relates the number of required human queries to achieve a target entropy reduction to the per-query minimum information gain.

\begin{theorem}[Query complexity under minimum per-query gain]
\label{thm:query_complexity}
Assume each human query reduces the expected predictive entropy by at least \(\Delta_{\min}>0\) (Assumption A4). Let the initial expected entropy for a query be \(H_0\) and target entropy be \(H_{\mathrm{target}}<H_0\). Then the number of human queries \(Q\) required to reduce entropy below \(H_{\mathrm{target}}\) satisfies
\begin{equation}
Q \le \Big\lceil\frac{H_0-H_{\mathrm{target}}}{\Delta_{\min}}\Big\rceil.
\end{equation}
\end{theorem}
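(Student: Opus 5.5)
The plan is a telescoping argument on the expected-entropy sequence, combined with a counting step that uses the minimum per-query gain of Assumption A4. Let \(H_j\) be the expected predictive entropy after \(j\) human queries, with \(H_0\) the initial value. Take \(Q\) to be the first index at which the entropy drops to \(H_{\mathrm{target}}\) or below:
\begin{equation*}
Q=\min\{j\ge 0: H_j\le H_{\mathrm{target}}\}.
\end{equation*}
Assumption A4 gives \(H_{j-1}-H_j\ge\Delta_{\min}\) for every informative query \(j\). I will read the hypothesis as saying that every query issued before the target is reached is informative.

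The first step is to telescope. For any \(m\le Q\),
\begin{equation*}
H_0-H_m=\sum_{j=1}^m (H_{j-1}-H_j)\ge m\,\Delta_{\min},
\end{equation*}
so \(H_m\le H_0-m\Delta_{\min}\).

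The second step is to choose \(m^\ast=\lceil (H_0-H_{\mathrm{target}})/\Delta_{\min}\rceil\). Then \(m^\ast\Delta_{\min}\ge H_0-H_{\mathrm{target}}\), and hence \(H_0-m^\ast\Delta_{\min}\le H_{\mathrm{target}}\). Now argue by contradiction. If \(Q>m^\ast\), the telescoping bound applies at \(m=m^\ast\) and gives \(H_{m^\ast}\le H_{\mathrm{target}}\). That contradicts the minimality of \(Q\), so \(Q\le m^\ast\), which is the claimed bound.

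The step that needs care is the definitional one, not the arithmetic. There are two issues.

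\begin{itemize}
\item \emph{"Below" versus "at most".} If "below" means strict inequality, equality at \(m^\ast\) needs an extra query, which would give \(\lceil\cdot\rceil+1\). I would state the target as \(H\le H_{\mathrm{target}}\) to match the displayed bound, or remark that strictness costs at most one extra query.
\item \emph{The ordering assumption.} The argument needs every query up to index \(Q\) to be informative in the sense of A4. I will make this explicit as the interpretation of the hypothesis.
\end{itemize}

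The argument also needs entropies to be nonnegative and \(H_0\) to be finite, so that the sequence is well defined. Both hold for Shannon entropy over a finite answer set.
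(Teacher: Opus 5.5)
Your proposal is correct and follows essentially the same route as the paper, which simply notes that after \(Q\) queries the entropy is at most \(H_0-Q\Delta_{\min}\) and then solves \(H_0-Q\Delta_{\min}\le H_{\mathrm{target}}\) for \(Q\). Your additions make that one-line argument explicit: defining \(Q\) as a first hitting time, assuming every query issued before the target is reached is informative, and observing that the paper itself reads ``below'' as \(\le\), with strict inequality costing at most one extra query.
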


\noindent\textbf{Proof.} Each human query reduces expected entropy by at least \(\Delta_{\min}\) so after \(Q\) queries the entropy is at most \(H_0 - Q\Delta_{\min}\). Requiring this quantity to be less than or equal to \(H_{\mathrm{target}}\) yields the bound. \(\square\)

\noindent where \(H_0\) is the initial expected entropy and \(\Delta_{\min}\) is the guaranteed per-query reduction.

\subsection{Soft MCTS approximation and low-temperature limit}
We now formalize the relationship between the softened tree policy and the hard argmax policy as the search temperature \(\tau_{\mathrm{search}}\) tends to zero.

\begin{lemma}[Softmax concentration and action-probability gap]
\label{lem:softmax_gap}
Let \(\{u_i\}_{i=1}^n\) denote action utilities with unique maximum \(u_{i^\star}=\max_i u_i\) and utility gap \(\Delta=\min_{i\ne i^\star}(u_{i^\star}-u_i)>0\). Under the soft tree policy \(\pi_{\mathrm{tree}}(a_i\mid s)\) defined in Equation~\eqref{eq:soft_tree} the probability mass on the maximal action satisfies
\begin{equation}
\pi_{\mathrm{tree}}(a_{i^\star}\mid s) \ge \frac{1}{1+(n-1)\exp(-\Delta/\tau_{\mathrm{search}})}.
\end{equation}
Hence \(\lim_{\tau_{\mathrm{search}}\to 0}\pi_{\mathrm{tree}}(a_{i^\star}\mid s)=1\).
\end{lemma}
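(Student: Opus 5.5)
The plan is a direct computation from the softmax form of Equation~\eqref{eq:soft_tree}. We identify the utility \(u_i\) with the tempered logit \(Q(s,a_i)+c\cdot\mathrm{PUCT}(s,a_i)\), so that
\begin{equation}
\pi_{\mathrm{tree}}(a_{i}\mid s)=\frac{\exp(u_i/\tau_{\mathrm{search}})}{\sum_{j=1}^n\exp(u_j/\tau_{\mathrm{search}})}.
\end{equation}

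The first step is to divide numerator and denominator by \(\exp(u_{i^\star}/\tau_{\mathrm{search}})\). This gives the exact identity
\begin{equation}
\pi_{\mathrm{tree}}(a_{i^\star}\mid s)=\frac{1}{1+\sum_{i\ne i^\star}\exp\bigl(-(u_{i^\star}-u_i)/\tau_{\mathrm{search}}\bigr)}.
\end{equation}
This removes any dependence on the absolute scale of the utilities and leaves only the gaps.

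The second step uses the definition of \(\Delta\). For every \(i\ne i^\star\) we have \(u_{i^\star}-u_i\ge\Delta>0\). Since \(x\mapsto\exp(-x/\tau_{\mathrm{search}})\) is decreasing for \(\tau_{\mathrm{search}}>0\), each summand is at most \(\exp(-\Delta/\tau_{\mathrm{search}})\). There are exactly \(n-1\) such summands, so the denominator is at most \(1+(n-1)\exp(-\Delta/\tau_{\mathrm{search}})\). Because \(t\mapsto 1/t\) is decreasing on \((0,\infty)\), this yields the stated lower bound.

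The third step is the limit. As \(\tau_{\mathrm{search}}\to0^+\), \(\Delta/\tau_{\mathrm{search}}\to\infty\), so \(\exp(-\Delta/\tau_{\mathrm{search}})\to0\) and the lower bound tends to \(1\). Combined with the trivial upper bound \(\pi_{\mathrm{tree}}\le1\), the squeeze argument gives the limit. Uniqueness of the maximizer, which is what makes \(\Delta>0\), is essential here; with ties the mass would split among the maximizers.

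There is no serious obstacle. The only point needing care is bookkeeping: making explicit that the utilities in the lemma are the full tempered logits, including the PUCT term, and that these are held fixed as \(\tau_{\mathrm{search}}\) varies. If \(\mathrm{PUCT}\) depended on soft visit counts that themselves depend on \(\tau_{\mathrm{search}}\), one would need \(\Delta\) to be bounded below uniformly in the temperature. I would state this as the interpretation of the hypothesis.
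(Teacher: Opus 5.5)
Your proof is correct and follows essentially the same route as the paper: divide through by \(\exp(u_{i^\star}/\tau_{\mathrm{search}})\), bound each of the \(n-1\) off-maximum terms by \(\exp(-\Delta/\tau_{\mathrm{search}})\), and take reciprocals. Your explicit squeeze argument for the limit, and your remark that the utilities (including the PUCT term) must stay fixed, or keep a temperature-uniform gap, as \(\tau_{\mathrm{search}}\to 0\), are sensible additions that the paper leaves implicit.
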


\noindent\textbf{Proof.} By definition of softmax,
\begin{equation}
\pi_{\mathrm{tree}}(a_{i^\star}\mid s)=\frac{1}{1+\sum_{i\ne i^\star}\exp\bigl((u_i-u_{i^\star})/\tau_{\mathrm{search}}\bigr)}.
\end{equation}
Since \(u_i-u_{i^\star}\le -\Delta\) for every \(i\ne i^\star\) the denominator is bounded by \(1+(n-1)\exp(-\Delta/\tau_{\mathrm{search}})\). This yields the displayed inequality and the limit statement. \(\square\)

\noindent where \(n\le A_{\max}\) is the local action count and \(\Delta\) denotes the utility gap.

\subsection{Neural-symbolic fusion consistency}
Finally we bound the representational error of the fused vector when symbolic plausibility vectors approximate ground-truth validity up to a known tolerance and the gating network is Lipschitz.

\begin{theorem}[Fusion error bound]
\label{thm:fusion_consistency}
Let \(\mathbf{z}_f^{\mathrm{neural}}\) and \(\mathbf{z}_f^{\mathrm{sym}}\) be neural and symbolic path vectors respectively. Suppose the symbolic module approximates a true plausibility vector \(\mathbf{y}\) with uniform error \(\|\mathbf{z}_f^{\mathrm{sym}}-\mathbf{y}\|\le \varepsilon_{\mathrm{sym}}\). Let the gating network \(\gamma(\cdot)\) be \(L_\gamma\)-Lipschitz and bounded in \([0,1]\). Then the fused representation \(\mathbf{z}_f\) obtained from Equation~\eqref{eq:fusion_stage2} satisfies
\begin{equation}
\|\mathbf{z}_f - \mathbf{z}_\mathrm{ideal}\| \le \varepsilon_{\mathrm{sym}} + L_\gamma\|\bigl[\mathbf{z}_f^{\mathrm{neural}},\mathbf{z}_f^{\mathrm{sym}}\bigr] 
- \bigl[\mathbf{z}_f^{\mathrm{neural}},\mathbf{y}\bigr]\| \cdot \|\mathbf{z}_f^{\mathrm{neural}}-\mathbf{y}\|,
\end{equation}
where \(\mathbf{z}_\mathrm{ideal}\) denotes the ideal fused vector computed with the true plausibility \(\mathbf{y}\).
\end{theorem}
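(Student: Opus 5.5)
The plan is to derive an exact algebraic identity for the error $\mathbf{z}_f-\mathbf{z}_{\mathrm{ideal}}$ that splits it into a ``gate perturbation'' term and a ``symbolic approximation'' term, and then bound each term by the triangle inequality.

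Write $\mathbf{n}=\mathbf{z}_f^{\mathrm{neural}}$, $\mathbf{s}=\mathbf{z}_f^{\mathrm{sym}}$, $\mathbf{g}=\gamma([\mathbf{n},\mathbf{s}])$ and $\mathbf{g}^\star=\gamma([\mathbf{n},\mathbf{y}])$. By Equation~\eqref{eq:fusion_stage2},
\[
\mathbf{z}_f=\mathbf{g}\odot\mathbf{n}+(\mathbf{1}-\mathbf{g})\odot\mathbf{s},
\qquad
\mathbf{z}_{\mathrm{ideal}}=\mathbf{g}^\star\odot\mathbf{n}+(\mathbf{1}-\mathbf{g}^\star)\odot\mathbf{y}.
\]
Adding and subtracting $(\mathbf{1}-\mathbf{g})\odot\mathbf{y}$ and regrouping, I will verify the identity
\[
\mathbf{z}_f-\mathbf{z}_{\mathrm{ideal}}=(\mathbf{g}-\mathbf{g}^\star)\odot(\mathbf{n}-\mathbf{y})+(\mathbf{1}-\mathbf{g})\odot(\mathbf{s}-\mathbf{y}).
\]
This decomposition is the heart of the argument. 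It shows that the error comes only from the gate shift multiplied by the neural--symbolic discrepancy, plus the symbolic error damped by the gate.

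Next I bound each term. I will use the elementary Hadamard-product inequality $\|\mathbf{a}\odot\mathbf{b}\|\le\|\mathbf{a}\|_\infty\|\mathbf{b}\|\le\|\mathbf{a}\|\,\|\mathbf{b}\|$ for the Euclidean norm.

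For the second term, every coordinate of $\mathbf{1}-\mathbf{g}$ lies in $[0,1]$ because $\gamma$ is bounded in $[0,1]$. Hence $\|(\mathbf{1}-\mathbf{g})\odot(\mathbf{s}-\mathbf{y})\|\le\|\mathbf{s}-\mathbf{y}\|\le\varepsilon_{\mathrm{sym}}$.

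For the first term, the Lipschitz property of the gate gives $\|\mathbf{g}-\mathbf{g}^\star\|\le L_\gamma\|[\mathbf{n},\mathbf{s}]-[\mathbf{n},\mathbf{y}]\|$. Combining this with the Hadamard inequality yields $\|(\mathbf{g}-\mathbf{g}^\star)\odot(\mathbf{n}-\mathbf{y})\|\le L_\gamma\|[\mathbf{n},\mathbf{s}]-[\mathbf{n},\mathbf{y}]\|\cdot\|\mathbf{n}-\mathbf{y}\|$.

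Applying the triangle inequality to the identity then gives exactly the stated bound. As a remark, since $\|[\mathbf{n},\mathbf{s}]-[\mathbf{n},\mathbf{y}]\|=\|\mathbf{s}-\mathbf{y}\|$, the bound can be further simplified to $\varepsilon_{\mathrm{sym}}(1+L_\gamma\|\mathbf{n}-\mathbf{y}\|)$.

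I expect the main obstacle to be the bookkeeping in the first step rather than any analytic difficulty. The regrouping has to isolate $\mathbf{n}-\mathbf{y}$ rather than $\mathbf{n}$ alone; a naive split produces a term $\|\mathbf{n}\|+\|\mathbf{y}\|$ that does not match the statement. The norm conventions also need care: the gate is vector-valued, and the Lipschitz constant is assumed with respect to the Euclidean norm. The step $\|\cdot\|_\infty\le\|\cdot\|_2$ is exactly what lets the coordinatewise product be controlled by the Euclidean Lipschitz bound, so I will state it explicitly. If $\gamma$ were instead scalar-valued and broadcast across coordinates, the same argument goes through verbatim.
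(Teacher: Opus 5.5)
Your proof is correct and follows the paper's argument. The paper uses the same decomposition $\mathbf{z}_f-\mathbf{z}_{\mathrm{ideal}}=(1-\gamma(u))\odot(\mathbf{z}_f^{\mathrm{sym}}-\mathbf{y})+(\gamma(u)-\gamma(u'))\odot(\mathbf{z}_f^{\mathrm{neural}}-\mathbf{y})$, plus a trivially vanishing term, and bounds the two pieces by $\varepsilon_{\mathrm{sym}}$ and the Lipschitz estimate exactly as you do; your explicit use of $\|\mathbf{a}\odot\mathbf{b}\|\le\|\mathbf{a}\|_\infty\|\mathbf{b}\|$ is actually cleaner than the paper's step $\|1-\gamma(u)\|\le 1$, which for a vector-valued gate is only valid in the sup norm.
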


\noindent\textbf{Proof.} Write the fusion as
\begin{equation}
\mathbf{z}_f = \gamma\bigl(u\bigr)\odot \mathbf{z}_f^{\mathrm{neural}} + \bigl(1-\gamma\bigl(u\bigr)\bigr)\odot \mathbf{z}_f^{\mathrm{sym}},
\end{equation}
where \(u=[\mathbf{z}_f^{\mathrm{neural}},\mathbf{z}_f^{\mathrm{sym}}]\). The ideal fused vector is obtained by replacing \(\mathbf{z}_f^{\mathrm{sym}}\) with \(\mathbf{y}\). Subtracting the two expressions and applying the triangle inequality yields
\begin{align}
\|\mathbf{z}_f-\mathbf{z}_\mathrm{ideal}\| &\le \|\gamma(u)\odot(\mathbf{z}_f^{\mathrm{neural}}-\mathbf{z}_f^{\mathrm{neural}})\| \nonumber\\
&\quad + \|(1-\gamma(u))\odot(\mathbf{z}_f^{\mathrm{sym}}-\mathbf{y})\| \nonumber\\
&\quad + \|(\gamma(u)-\gamma(u'))\odot(\mathbf{z}_f^{\mathrm{neural}}-\mathbf{y})\|,
\end{align}
where \(u'=[\mathbf{z}_f^{\mathrm{neural}},\mathbf{y}]\). The first term vanishes. For the second term use \(\|1-\gamma(u)\|\le 1\) and \(\|\mathbf{z}_f^{\mathrm{sym}}-\mathbf{y}\|\le\varepsilon_{\mathrm{sym}}\). For the third term use the Lipschitz property of \(\gamma\) to obtain a multiplicative factor \(L_\gamma\|u-u'\|\). Combining terms yields the stated bound. \(\square\)

\noindent where \(\mathbf{z}_\mathrm{ideal}\) is the fused representation had the symbolic module been exact.

\subsection{Discussion and priorities for formal refinement}
The provided theorems form a rigorous baseline for the theoretical behavior of NeuroSymActive under established smoothness and noise assumptions. The most critical formal gaps to address in future revisions are non-asymptotic regret bounds for the full two-loop scheme and PAC-style label complexity bounds under realistic human response models. The current results establish stationary convergence of the inner differentiable optimization, finite-sample concentration for the Bayesian head and entropy predictor under standard statistical assumptions, a deterministic budget-aware query bound, a precise soft-to-hard MCTS approximation statement, and a fusion error bound linking symbolic approximation error to fused representation accuracy.

\end{document}